\pdfoutput=1
\relax
\documentclass[letterpaper]{article}
\usepackage{aaai16_mod}
\usepackage{times}
\usepackage{helvet}
\usepackage{courier}
\usepackage{graphicx}
\usepackage{siunitx}
\usepackage{amsmath,amssymb,amsthm,bm}
\usepackage[norelsize, ruled, lined, boxed]{algorithm2e}
\usepackage[letterpaper]{geometry}
\geometry{verbose,tmargin=3cm,bmargin=3cm,lmargin=3cm,rmargin=3cm}

\frenchspacing
\setlength{\pdfpagewidth}{8.5in}
\setlength{\pdfpageheight}{11in}
\pdfinfo{
/Title (Exploiting Anonymity in Approximate Linear Programming: Scaling to Large Multiagent MDPs (Extended Version))
/Author (Philipp Robbel, Frans A. Oliehoek, Mykel J. Kochenderfer)}
\setcounter{secnumdepth}{2}  

\theoremstyle{plain}
\newtheorem{thm}{Theorem}
\newtheorem{lem}{Lemma}

\theoremstyle{definition}
\newtheorem{defn}{Definition}
\newtheorem{expl}{Example}

\usepackage{color}



\providecommand{\m}[1]{\mathbf{#1}} 

\DeclareMathOperator*{\argmax}{arg\,max}
\newcommand{\norm}[1]{\left\lVert#1\right\rVert}

\makeatletter
\newcommand{\removelatexerror}{\let\@latex@error\@gobble}
\makeatother

\begin{document}

\title{Exploiting Anonymity in Approximate Linear Programming: \\Scaling to Large Multiagent MDPs\\(Extended Version)}

\date{}

\author{Philipp Robbel\\
MIT Media Lab\\
Cambridge, MA, USA\\
\and
Frans A. Oliehoek\\
University of Amsterdam\\
University of Liverpool\\
\and
Mykel J. Kochenderfer\\
Stanford University\\
Stanford, CA, USA}
\maketitle

\begin{abstract} 
\begin{quote}
Many exact and approximate solution methods for Markov Decision Processes (MDPs) exploit structure in the problem and are based on factorization of the value function.  Especially multiagent settings, however, are known to suffer from an exponential increase in value component sizes as interactions become denser, 
meaning that approximation architectures are restricted in the problem sizes and types they can handle.
We present an approach to mitigate this limitation for certain types of multiagent systems, exploiting a property that can be thought of as ``anonymous influence'' in the factored MDP.  Anonymous influence summarizes joint variable effects efficiently whenever the explicit representation of variable identity in the problem can be avoided.
We show how representational benefits from anonymity translate into computational efficiencies, both for 
variable elimination in a factor graph and 
for the approximate linear programming solution to factored MDPs.  
Our methods scale to factored MDPs that were previously unsolvable, such as 
the control of a stochastic disease process over densely connected graphs with 50 nodes and 25 agents.

\end{quote}
\end{abstract}

\section{Introduction}
Cooperative multiagent systems (MASs) present an important framework for modeling the interaction between agents that collaborate to solve a task.  In the decision-theoretic community, 
models like the Markov Decision Process (MDP) and its partially observable extensions have seen widespread use to model and solve such complex planning problems for single and multiple agents in stochastic worlds.  
Multiagent settings, however, are known to suffer from negative complexity results as they scale to realistic settings \cite{Boutilier96:TARK}.  This is because state and action spaces tend to grow exponentially with the agent number, making common solution methods that rely on the full enumeration of the joint spaces prohibitive.

Many problem representations thus attempt to exploit structure in the domain to improve efficiency.  
Factored MDPs (FMDPs) represent the problem in terms of a state space $\mathcal{S}$ that is spanned by a number of state variables, or factors, $X_1,\ldots,X_N$ \cite{Boutilier+al99:JAIR}.  Their multiagent extension (FMMDP) exploits a similar decomposition over the action space $\mathcal{A}$ and allows the direct representation of the ``locality of interaction'' that commonly arises in many multiagent settings \cite{Guestrin+al02:NIPS}.  

Unfortunately, the representational benefits from factored descriptions do not in general translate into gains for policy computation \cite{Koller+Parr99:IJCAI}.  
Even in extremely decentralized settings, for example
Dec-MDPs~\cite{Bernstein+al02:OR}, the value function is coupled because the 
actions of any agent will affect the rewards received in distant
parts of the system.  In FMMDPs this coupling is exacerbated by the fact that
each agent in principle should condition its action on the entire
state~\cite{Oliehoek13AAMAS}.
Still, many solution methods successfully exploit structure in the domain, both in exact 
and approximate settings
, and have demonstrated scalability to large state spaces \cite{Hoey+al99:UAI,Raghavan+al12:AAAI,Cui+al15:AAAI}.  

In this paper we focus on approaches that additionally address larger numbers of agents through value factorization, assuming that smaller, 
\emph{localized} value function components can approximate the global value function well \cite{Guestrin+al03:JAIR,Kok+Vlassis06:JMLR}.  The approximate linear programming (ALP) approach of \citeauthor{Guestrin+al03:JAIR} (\citeyear{Guestrin+al03:JAIR}) is one of the few approaches in this class that retains no exponential dependencies in the number of agents and variables 
through the efficient computation of the constraints in the linear program based on a variable elimination (VE) method.  While the approach improved scalability dramatically, 
the method retains an exponential dependency on the induced tree-width (the size of the largest intermediate term formed during VE), meaning that its feasibility 
depends fundamentally on the connectivity and scale of the factor graph defined by the FMMDP and chosen basis function coverage.

We present an approach that aims to mitigate 
the exponential dependency of VE (both in space and time) 
on the induced width, which is caused by the need to 
represent 
all combinations of state and action variables that appear in each manipulated factor.  In many domains, however, different combinations lead to similar effects, or \emph{influence}.  
Serving as a running example is a disease control scenario over large graphs consisting of uncontrolled and controlled nodes, along with the connections that define possible disease propagation paths 
\cite{Ho+al15:CDC,Cheng+al13:NIPS}.  In this setting the aggregate infection rate of the parent nodes, independent of their individual identity, fully defines the behavior of the propagation model. 
This observation extends to many MASs that are more broadly concerned with the control of dynamic processes on networks, e.g. with stochastic fire propagation models or energy distribution in power grids \cite{Liu+al11:NAT,Cornelius+al13:NAT}.  

We propose to exploit this anonymity of influences for more efficient solution methods for MMDPs.  Our particular contributions are as follows:

\begin{enumerate}
\item We introduce a novel \emph{redundant representation (RR)} for the factors that VE manipulates which involves count aggregators.  This representation is exponentially more compact then regular flat representations and can also be exponentially more compact than existing ``shattered'' representations \cite{Taghipour+alJAIR13,Milch+al08:AAAI}.  
\item We show how to derive an efficient VE algorithm, RR-VE, that makes use of the redundant representation, and prove its correctness.  While the induced tree width does not change, since operations on the factors modified during VE can now run in less than exponential time with respect to the number of variables, RR-VE can scale theoretically to much larger problems. 
\item We then propose RR-ALP, which extends the ALP approach by making use of RR-VE, and maintains identical solutions.  The RR-ALP consists of equivalent but smaller constraints sets for factored MDPs that support anonymous influence.  
\item We show an empirical evaluation of our methods that demonstrates speed-ups of the ALP method by an order of magnitude in a sampled set of random disease propagation graphs with 30 nodes.  We also demonstrate the ability to scale to problem sizes that were previously infeasible to solve with the ALP solution method and show how the obtained policy outperforms a hand-crafted heuristic by a wide margin in a 50-node disease control problem with 25 agents.
\end{enumerate}

\section{Background}
We first discuss the necessary background on factored multiagent MDPs and their efficient solution methods that are based on value factorization.

\subsection{Factored Multiagent MDPs}
\label{sec:2}
Markov decision processes are a general framework for decision making under uncertainty \cite{Kochenderfer15:DMU,Puterman05:MDP}.  An infinite-horizon Markov decision process (MDP) is defined by the tuple $\langle \mathcal{S},\mathcal{A},T,R,\gamma \rangle$, where $\mathcal{S}=\{s_1,\ldots,s_{|\mathcal{S}|}\}$ and $\mathcal{A}=\{a_1,\ldots,a_{|\mathcal{A}|}\}$ are the finite sets of states and actions, $T$ the transition probability function specifying $P(s'\mid s,a)$, $R(s,a)$ the immediate reward function, and $\gamma\in [0,1]$ the discount factor of the problem.
\begin{figure}[t]
  \begin{center}
    \includegraphics[width=0.26\columnwidth]{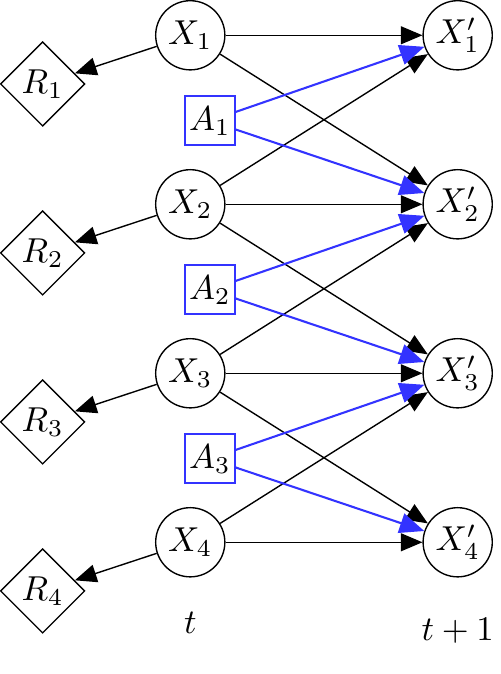}
  \end{center}
  \caption{A two-slice temporal Bayesian network (2TBN) representation of the collaborative factored multiagent MDP (FMMDP) with three agents, where parent node sets $\textnormal{Pa}(X_i')$ include both state and action variables.}
  \label{fig:2tbn}
\end{figure}

Factored MDPs (FMDPs) exploit structure in the state space $\mathcal{S}$ and define the system state by an assignment to the state variables $\mathbf{X} = \{X_1,\ldots,X_n\}$.  Transition and reward function decompose into a two-slice temporal Bayesian network (2TBN) consisting of independent factors, each described by their scope-restricted conditional probability distributions (CPDs) \cite{Boutilier+al99:JAIR}.

In the case of collaborative multiagent systems, the agent set $\mathbf{A} = \{A_1,\ldots,A_g\}$ additionally spans a joint action space $\mathcal{A}$ that is generally exponential in the number of agents.  The factored multiagent MDP (FMMDP) is a tractable representation that introduces action variables into the 2TBN \cite{Guestrin+al02:NIPS} (see Figure~\ref{fig:2tbn} for an illustration).  
The FMMDP transition function can be written as
\begin{equation}
\label{eqn:2tbn}
P(\mathbf{x'}\mid \mathbf{x},\mathbf{a}) = \prod_i 
T_i(x_i'\mid \mathbf{x}[\textnormal{Pa}(X_i')],\mathbf{a}[\textnormal{Pa}(X_i')])
\end{equation}
where $\textnormal{Pa}(X_i')$ refers to the parent nodes of $X_i'$ in the 2TBN (covering both state \emph{and} action variables), 
and $\mathbf{x}[\textnormal{Pa}(X_i')]$ to their value in state $\mathbf{x}$.  The value of the respective action variables is analogously denoted by $\mathbf{a}[\textnormal{Pa}(X_i')]$.  Collaborative FMMDPs assume that each agent $i$ observes part of the global reward and is associated with (restricted scope) 
local reward function $R_i$, such that the global reward factors additively as $R(\mathbf{x},\mathbf{a}) = \sum_{i=1}^g R_i(\mathbf{x}[\m{C}_i],\mathbf{a}[\m{D}_i])$ for some subsets of state and action variables $\m{C}_i$ and $\m{D}_i$, respectively.  

The solution to an (M)MDP is a (joint) policy that optimizes some optimality criterion about the rewards, e.g. 
the expected sum of discounted rewards that 
can be achieved from any state.  
We consider value-based solution methods that store this expected return for every state $\m{x}$ in a (state) value function $\mathcal{V}(\m{x})$.  
The optimal value function $\mathcal{V}^*(\m{x})$ represents the maximum
expected return possible from every state \cite{Puterman05:MDP}.
Such an (optimal) value function can be used to extract an (optimal) policy by
performing a \emph{back-projection} through the transition function to compute the so-called Q-function:
\begin{equation}
\label{eqn:q-fn}
    \forall_{\m{x},\m{a}} \quad
\mathcal{Q}^*(\m{x},\m{a}) = R(\m{x},\m{a}) + \gamma \sum_{\m{x'}}P(\m{x}'\mid
\m{x},\m{a}) \mathcal{V}^*(\m{x}),
\end{equation}
and subsequently acting greedy with respect to the Q-function: the
optimal action at $\m{x}$ is $\m{a}^* = \arg \max \mathcal{Q}^*(\m{x},\m{a}) $.

\subsection{Control of Epidemics on Graphs}
\label{sec:2-dom}
We use the control of a disease outbreak over a graph as a running example from the wider problem class of controlling dynamic processes on networks \cite{Ho+al15:CDC}.  
%
The disease outbreak dynamics follow a version of the susceptible-infected-susceptible (SIS) model with homogeneous model parameters in the network \cite{Bailey57:book}.  SIS dynamics have been thoroughly studied for many years but only few approaches consider the complex \emph{control} problem of epidemic processes \cite{Nowzari+al15:arXiv,Ho+al15:CDC}.

SIS dynamics with homogeneous parameters are modeled as an FMMDP as follows.  We define the network as a (directed or undirected) graph $G=(V,E)$ with controlled and uncontrolled vertices $V=(V_c,V_u)$ and edge set $E\subseteq V\times V$.  The state space $\mathcal{S}$ is spanned by state variables $X_1,\ldots,X_n$, one per associated vertex $V_i$, encoding the health of that node.  The agent set $\m{A}=\{A_1,\ldots,A_{|V_c|}\}$ factors similarly over the controlled vertices $V_c$ in the graph and denote an active modulation of the flow out of node $V_i\in V_c$.  Note that this model assumes binary state variables $X_i=\{0,1\}=\{\text{healthy},\text{infected}\}$, and actions $A_i=\{0,1\}=\{\text{do not vaccinate},\text{vaccinate}\}$ and that $A_u=\{0\}$ for all uncontrolled nodes $V_u$.  

Let $x_i$ and $a_i$ denote the state and action for a single node.  
The transition function factors on a per-node basis into 
$T_i(x_i' = \text{infected} \mid \mathbf{x}[\textnormal{Pa}(X_i')],a_i)$ defined as:
\begin{equation}
T_i \triangleq
\begin{cases}
(1-a_i)(1-\prod_j(1-\beta_{ji}x_j)) & \text{if }x_i=0\\
(1-a_i)\,(1-\delta_i) & \text{otherwise}\\
\end{cases}
\end{equation}
distinguishing the two cases that $X_i$ was infected at the previous time step (bottom) or not (top).  Parameters $\beta_{ji}$ and $\delta_i$ are the known infection transmission probabilities from node $j$~to~$i$, and node $i$'s recovery rate, respectively.  The reward function factors as:
\begin{equation}
R(\mathbf{x},\mathbf{a}) = -\lambda_1\norm{\mathbf{a}}_1 -\lambda_2\norm{\mathbf{x}}_1
\end{equation}
where the $L_1$ norm records a cost $\lambda_2$ per infected node 
and an action cost $\lambda_1$ per vaccination action at a controlled node.

\subsection{Efficient Solution of Large FMMDPs}
\label{sec:effsoln}
We build upon the work by \citeauthor{Guestrin+al03:JAIR} (\citeyear{Guestrin+al03:JAIR}), who present approximate solution methods for FMMDPs that are particularly scalable in terms of the agent number.  The approach represents the global Q-value function as the sum of appropriately chosen smaller value function components, each defined over a subset of state and action variables.  Referred to as factored linear value functions, they permit efficient operations, such as the computation of the jointly maximizing action in a state, by treating it as a constraint optimization problem (e.g., \cite{Dechter13:book}) where the maximizing configuration can be found with methods such as variable elimination (VE) that avoid the enumeration of exponentially many actions.

To compute a factored linear value function, 
the same authors present an efficient method to compactly represent the constraints in the approximate linear programming (ALP) solution to MDPs. 
Their extensions circumvent exponentially many constraints by implicitly making use of the above VE trick with factored linear value functions: since the constraints in the linear program can be interpreted as a maximization over sums of local terms, it is possible to replace them by an equivalent, smaller set of constraints based on the same insights as above.  
The remainder of this subsection gives more detail on this efficient solution method.

\subsubsection{Factored Value Functions}
Value factorization is one successful approach that addresses both large $\mathcal{S}$ and $\mathcal{A}$ by representing the joint value function as a linear combination of locally-scoped terms.  Each local term applies to a part of the system and covers potentially multiple, even overlapping, state factors: $\mathcal{V}(\m{x}) = \sum_i\mathcal{V}_i(\m{x}[\m{C}_i])$ for local state scopes $\mathbf{C}_i \subseteq \{X_1,\ldots,X_n\}$.
Note that in the limit of a single value function term this representation is simply a single joint value function in the global state $\m{x}$; still, one may hope that a set of lower-dimensional components may yield an adequate approximation in a large structured system. 

In the case of factored \emph{linear} value functions given a set of (possibly non-linear)
basis functions $H=\{h_1,\ldots,h_k\}$, 
$\mathcal{V}$ can be written as the linear combination 
$\mathcal{V}(\m{x}) = \sum_{j=1}^k w_j h_j(\m{x})$ 
where 
$h_j$ is defined over some 
subset of variables $\m{C}_{h_j}\subseteq \m{X}$ (omitted for clarity), and $w_j$ is the weight associated with basis $h_j$.  

Factored linear (state) value functions induce factored Q-value functions if transitions and rewards are factored into local terms.  
In this case, the \emph{back-projection} in the computation of the Q-value function can be computed efficiently by avoiding the sum over exponentially many successor states in Equation~\ref{eqn:q-fn} \cite{Guestrin03:PhD}.  
This is because 
the expectation over an individual basis function $h_j(\m{x}[\m{C}_{h_j}])$ 
can be computed efficiently since the scope of the variables that appear as \emph{parents} of $X_i\in\m{C}_{h_j}$ in the 2TBN remains local.  These expectations are referred to as \emph{basis back-projections} of the functions $h_j$ and denoted by $g_j(\m{x},\m{a})$.  

\begin{defn}[Basis back-projection]
\label{defn:backprop}
Given a basis function $h_j:\m{C}\to\mathbb{R}$, defined over scope $\m{C}\subseteq \m{X}$, and a factored 2TBN transition model $P(\m{x'}\mid \m{x},\m{a})$ (see Equation~\ref{eqn:2tbn}), define the \emph{basis back-projection} of $h_j$ as:
\begin{equation}
\label{eqn:backprop}
\begin{array}{rcl}
g_j(\m{x},\m{a}) & \triangleq & \sum_{\m{x'}}P(\m{x'}\mid \m{x},\m{a})\,h_j(\m{x'}[\m{C}])\\
& = & \sum_{\m{c}'} P(\m{c}'\mid \m{x},\m{a})\,h_j(\m{c}')\\
& = & \sum_{\m{c}'} P(\m{c}'\mid \m{x}[\textnormal{Pa}(\m{C})],\m{a}[\textnormal{Pa}(\m{C})])\,h_j(\m{c}')\\
\end{array}
\end{equation}
where $\textnormal{Pa}(\m{C})\triangleq \bigcup_{X_i\in\m{C}} \textnormal{Pa}(X_i)$ denotes the union of respective parent (state and action) variables in the 2TBN.
\end{defn}
Functions $g_j$ are thus again locally-scoped, defined precisely over the parent scope $\textnormal{Pa}(\m{C})$ (omitted for clarity in the remainder of the presentation).  Basis back-projections are used to compute a factored Q-value function:
\begin{equation}
\label{eqn:fac-q}
\begin{array}{rcl}
    \mathcal{Q}(\mathbf{x},\mathbf{a}) &=& R(\mathbf{x},\mathbf{a}) + \gamma \sum_{\mathbf{x'}}P(\mathbf{x'}|\mathbf{x},\mathbf{a}) \sum_jw_jh_{j}(\mathbf{x'})\\
    &=& \sum_{r} R_r(\mathbf{x}[\m{C}_r],\mathbf{a}[\m{D}_r]) + \gamma \sum_{j} w_j g_j(\m{x},\m{a})\\
    &=& \sum_{i} \mathcal{Q}_i(\m{x}[\m{C}_i],\m{a}[\m{D}_i])
\end{array}
\end{equation}
where the last line in Equation~\ref{eqn:fac-q} follows by associating disjoint subsets of local reward functions and basis back-projections with each $\mathcal{Q}_i$.  The factor graph spanned by a factored Q-value function instantiated in a particular state $\m{x}$ is in this context often referred to as a \emph{coordination graph} (CG).

\subsubsection{VE} 
\providecommand{\Aug}{\textsc{Augment}}
\providecommand{\Red}{\textsc{Reduce}}

The variable elimination (VE) algorithm can be used for computing the max over a set of locally-scoped functions in a factor graph efficiently.  Similarly to maximum a posteriori (MAP) estimation in Bayesian networks, VE maximizes over single variables at a time rather than enumerating all possible joint configurations followed by picking the maximizing one \cite{Koller+Friedman09:PGM}.  

Variable elimination performs two operations, $\Aug$ and $\Red$, repeatedly for every variable $X_{l}$ to be eliminated from the factor graph.  
Here, $\Aug$ corresponds to the sum of functions that depend on $X_{l}$ and $\Red$ to the maximization over $X_{l}$ in the result (see Figure \ref{fig:alg-ve}).
The execution time is exponential in the size of the largest intermediate term formed which depends on the chosen elimination order.  While the problem of determining the optimal elimination order is NP-complete, effective heuristics for variable ordering exist in practice \cite{Koller+Friedman09:PGM}.

\begin{figure}[htp]
\removelatexerror
\scalebox{.85}{ 
\begin{algorithm}[H]
\SetAlgoLined
\KwIn{$\mathcal{F}$ is a set of functions}
\KwIn{$\mathcal{O}$ is the elimination order over all variables}
\KwOut{The result of the maximization over all variables referred by $\mathcal{O}$}

\For{$i=1,\ldots,|\mathcal{O}|$}{
$l = \mathcal{O}(i)$\;
\tcp{Collect functions that depend on $X_l$}
$\mathcal{E} = \textsc{Collect}(\mathcal{F},X_l)$\;
\tcp{Compute the sum}
$f = \Aug(\mathcal{E})$\;
\tcp{Compute the max}
$e = \Red(f,X_l)$\;
\tcp{Update the function set}
$\mathcal{F} = \mathcal{F}\cup{\{e\}}\setminus \mathcal{E}$\;
}
\tcp{Sum the empty-scope functions}
\Return{$\Aug(\mathcal{F})$}\;
\caption{$\textsc{VariableElimination}(\mathcal{F},\mathcal{O})$}
\end{algorithm}
}
\caption{The \textsc{VariableElimination} algorithm computing the maximum value of $\sum_{f\in\mathcal{F}}f$ over the state space.}
\label{fig:alg-ve}
\end{figure}

VE also finds application in computing the maximizing joint action in a coordination graph defined over locally-scoped Q-value function terms, i.e.,
\[
\m{a}^*=\argmax_{\m{a}} \sum_i \mathcal{Q}_i(\m{x}[\m{C}_i],\m{a}[\m{D}_i])
\]
can be done efficiently with the decision-making equivalent to VE in a Bayesian network \cite{Guestrin+al02:NIPS,Kok+Vlassis06:JMLR}.  As a result, action selection in a particular state $\m{x}$ can avoid the direct enumeration of exponentially many (joint) action choices.

\subsubsection{ALP} 
VE can be used for efficient joint action selection in a particular state given a factored Q-function, but it does not give a way to directly compute such a factored Q-function.  The approximate linear programming (ALP) variant introduced by \cite{Guestrin+al03:JAIR} does allow this by implicitly making use of the above VE technique.  It builds on the regular ALP method for solving MDPs which computes the best approximation (in a weighted $L_1$ norm sense) to the optimal value function in the space spanned by the basis functions \cite{Puterman05:MDP}.  The basic ALP formulation for an infinite horizon discounted MDP given basis choice $h_1,\ldots,h_k$ is given by:
\begin{equation}
\label{alp}
\begin{array}{rl}
\displaystyle \min_{\m{w}} & \sum_{\m{x}} \alpha(\m{x}) \sum_i w_i h_i(\m{x}) \\
\textrm{s.t.} & \sum_i w_i h_i(\m{x}) \geq [R(\m{x},\m{a}) + \gamma \sum_{\m{x'}}P(\m{x'}\mid \m{x},\m{a})\sum_iw_ih_i(\m{x'})]\,\forall \m{x},\m{a}
\end{array}
\end{equation}
for state relevance weights $\alpha(\m{x})$ (assumed uniform here) and variables $w_i$ unbounded.  The ALP yields a solution in time polynomial in the sizes of $\mathcal{S}$ and $\mathcal{A}$ but these are exponential for MASs.

\citeauthor{Guestrin03:PhD} (\citeyear{Guestrin03:PhD}) introduces an efficient implementation of the ALP for factored linear value functions that avoids the exponentially many constraints in the ALP.  It applies if the basis functions have \emph{local scope} and transitions and rewards are factored.  Underlying it are two insights:

First, the sum over exponentially many successor states $\m{x'}$ \emph{in the constraints} in Equation~\ref{alp} can be avoided by realizing that the right-hand side of the constraints corresponds to the (factored) Q-function that was previously shown to admit efficient computation via basis back-projections (Definition~\ref{defn:backprop}).  

The second insight is that all (exponentially many) constraints in the ALP can 
be reformulated as follows:
\begin{equation}
\label{eqn:guestrin}
\setlength{\arraycolsep}{0.15cm}
\begin{array}{rrrcl}
&\forall \m{x},\m{a} & \sum_i w_i h_i(\m{x}) & \geq & R(\m{x},\m{a}) + \gamma \sum_i w_i\, g_i(\m{x},\m{a})\\
\Rightarrow &\forall \m{x},\m{a} & 0 & \geq & R(\m{x},\m{a}) + \sum_i w_i [\gamma g_i(\m{x},\m{a}) - h_i(\m{x})]\\
\Rightarrow& & 0 & \geq & \max_{\m{x},\m{a}}[\sum_rR_r(\m{x}[\m{C}_r],\m{a}[\m{D}_r])\; + \sum_i w_i [\gamma g_i(\m{x},\m{a}) - h_i(\m{x})]]\\
\end{array}
\end{equation}
The reformulation replaces the exponential set of linear constraints with a \emph{single} non-linear constraint (last row in Equation~\ref{eqn:guestrin}).  Using a procedure similar to VE, this max constraint can be implemented with a small set of linear constraints, avoiding the enumeration of the exponential state and action spaces.  To see this, consider an arbitrary intermediate term obtained during VE, $e'(\m{x}[\m{C}])=\Red(e(\m{x}[\m{C}\cup \{X_k\}]),X_k)$.  Enforcing that $e'$ is maximal over its domain can be implemented with $|Dom(e')|$ new variables and $|Dom(e)|$ new linear constraints in the ALP \cite{Guestrin03:PhD}:
\begin{equation}
e'(\m{x}[\m{C}]) \geq e(\m{x}[\m{C}\cup \{X_k\}]) \quad\forall \m{x}[\m{C}\cup \{X_k\}]\in Dom(e).
\end{equation}
The total number of linear constraints to implement the max constraint in Equation~\ref{eqn:guestrin} is only exponential in the size of the largest intermediate term formed during VE. 

\section{Anonymous Influence}
\label{sec:3}
At the core of the ALP solution method lies the assumption that VE can be carried out efficiently in the factor graph spanned by the local functions that make up the max constraint of Equation~\ref{eqn:guestrin}, i.e. that the scopes of all intermediate terms during VE remain small.  This assumption is often violated in many graphs of interest, e.g., in disease control where nodes may possess large in- or out-degrees.

In this section we develop a novel approach to deal with larger scope sizes in VE than were previously feasible.  Underlying it is the insight that in the class of graph-based problems considered here, 
only the \emph{joint effects} of sets of variables---rather than their identity---suffices to compactly describe the factors that appear in the max constraint and are manipulated during VE.  
We introduce a novel representation that is exponentially smaller than the equivalent full encoding of intermediate terms and show how VE retains correctness.  

First, we address the \emph{representation} of ``joint effects'' before showing 
how it can be exploited \emph{computationally} during VE and in the ALP.  
In our exposition we assume binary variables but the results carry over to the more general, discrete variable setting.

\subsection{Mixed-Mode Functions}
We define count aggregator functions to summarize the ``anonymous influence'' of a set of variables.  In the disease propagation scenario for example, the number of active parents uniquely defines the transition model $T_i$; the identity of the parent nodes is irrelevant for representing $T_i$.  The following definitions formalize this intuition.

\begin{defn}[Count Aggregator] Let $\mathbf{Z}=\left\{ Z_{1},\ldots,Z_{|\m{Z}|}\right\} $ be a set of binary variables, $Z_{i}\in\{0,1\}$.  The \emph{count aggregator (CA)} $\#\{\mathbf{Z}\}:Z_{1}\times\ldots\times Z_{|\m{Z}|}\mapsto\{0,\dots,|\m{Z}|\}$ is defined as: $\#\{\mathbf{Z}\}(\m{z})\triangleq\sum_{i=1}^{|\m{Z}|}z_{i}$.  $\m{Z}$ is also referred to as the \emph{count scope} of CA $\#\{\m{Z}\}$.
\end{defn}
Hence, CAs simply summarize the number of variables that appear `enabled' in its domain.  Conceptual similarities with generalized (or `lifted') counters in first-order inference are discussed in Section~\ref{sec:7}.  Functions that rely on CAs can be represented compactly.  

\begin{defn}[Count Aggregator Function] A \emph{count aggregator function (CAF)}, is a function $f:\mathbf{Z}\to\mathbb{R}$ that maps $\mathbf{Z}$ to the reals by making use of a CA.  That is, there exists a function $\mathfrak{f}:\left\{ 0,\dots,\left|\mathbf{Z}\right|\right\} \to\mathbb{R}$ such that $f$ can be defined with the function composition operator as:
\begin{equation}
f(\m{z})\triangleq\left[\mathfrak{f}\circ\#\{\mathbf{Z}\}\right](\mathbf{z}).\label{eq:CAF-definition}
\end{equation}
To make clear $f$'s use of a CA, we use the notation $f(\#(\mathbf{z}))$.
\end{defn}

CAFs have a \emph{compact representation} which is precisely the function $\mathfrak{f}$. It is compact, since it can be represented using $\left|\mathbf{Z}\right|+1$ numbers and $\left|\mathbf{Z}\right|+1\ll2^{\left|\mathbf{Z}\right|}$. 
Generally, whenever function representations are explicitly referred to in this paper, the fractal font is used.

We now introduce so-called ``mixed-mode'' functions $f$ that depend both on CAs and on other variables $\mathbf{X}$ that are not part of any CA:

\begin{defn}[Mixed-Mode Function] \label{def:mmfi} A function $f:\mathbf{X}\times\mathbf{Z}\rightarrow\mathbb{R}$
is called a \emph{mixed-mode function} (MMF), denoted $f(\mathbf{x},\#(\mathbf{z}))$,
if and only if
$
\forall\mathbf{x}\ \exists f_{\mathbf{X}}~ s.t.~ f(\mathbf{x},\mathbf{z})=f_{\mathbf{X}}(\#(\mathbf{z})).
$
That is, for each instantiation $\mathbf{x}$, there exists a CAF $f_{\mathbf{X}}(\#(\m{z}))$. We refer to $X_{i}\in\mathbf{X}$
as \emph{proper variables} and $Z_{j}\in\mathbf{Z}$ as \emph{count
variables} in the scope of~$f$. \end{defn}

\begin{expl}
\label{expl:dprop}
Consider the conditional probability distribution 
$T_i(X_i\mid \textnormal{Pa}(X_i))$ of a (binary) node $X_i$ and its parents 
in the (binary) disease propagation graph.  Let $x_i$ and $\bar{x}_i$ denote the case that node $i$ is infected and not infected, respectively.  Then $T_i(X_i\mid \#\{\textnormal{Pa}(X_i)\})$ is a \emph{mixed-mode function} that induces two CAFs, one for $x_i$ and one for $\bar{x}_i$.
\end{expl}

Mixed-mode functions generalize simply to those with multiple CAs, $f:\mathbf{X}\times\mathbf{Z}_{1}\times\ldots\times\mathbf{Z}_{N}\rightarrow\mathbb{R}$, denoted $f(\mathbf{x},\#_{1}(\mathbf{z}_{1}),\ldots,\#_{N}(\mathbf{z}_{N}))$.  
The following cases can occur:
\begin{enumerate}
\item 
MMFs with \emph{fully disjoint scopes} have mutually disjoint proper
and count variable sets, i.e., $\mathbf{X}\cap\mathbf{Z}_{i}=\emptyset\;\forall i=1,\ldots,N$
and $\mathbf{Z}_{i}\cap\mathbf{Z}_{j}=\emptyset\;\forall i\neq j$;
\item 
MMFs have \emph{shared proper and count variables} if and only if
$\exists i\;s.t.\;\mathbf{X}\cap\mathbf{Z}_{i}\neq\emptyset$;
\item 
MMFs have \emph{non-disjoint counter scopes} if and only if $\exists(i,j),i\neq j\;s.t.\;\mathbf{Z}_{i}\cap\mathbf{Z}_{j}\neq\emptyset$.
\end{enumerate}
In our treatment of MMFs we often refer to the \emph{canonical notation} $f(x,y,z,\#_{1}(a,b,z),\#_{2}(b,c))$ to denote a general MMF that includes both shared proper and count variables, as well as non-disjoint counter scopes. 

Summarizing, it is possible to represent certain anonymous influences using mixed-mode functions. In the following we will show that these can be compactly represented, which subsequently forms the basis for a more efficient VE algorithm.

\subsection{Compact Representation of MMFs}
Just as CAFs, a mixed-mode function $f$ has a compact \emph{representation} $\mathfrak{f}:\m{X}\times\{0,...,|\m{Z}|\}\rightarrow\mathbb{R}$ where 
$
f(\m{x},\#(\m{z}))\triangleq\mathfrak{f}(\m{x},\sum_{i=1}^{|\m{Z}|}z_{i}).
$
A mixed-mode function $f$ can thus be described with (at most) $K^{|\m{X}|}(|\m{Z}|+1)$ parameters where $K$ is an upper bound on $|Dom(X_{i})|$.

As mentioned before, we also consider MMFs with multiple CAs. In particular, let us examine a function $f(\#_{1}(a,b),\#_{2}(b,c))$ with two CAs that have a overlapping scope since both depend on shared variable $B$.
In order to consistently deal with overlaps in the count scope, previous
work has considered so-called shattered representations \cite{Taghipour+alJAIR13,Milch+al08:AAAI}.
A MMF with overlapping count scopes $f(\#_{1}(a,b),\#_{2}(b,c))$ can always be transformed into an equivalent one without overlapping count scopes $f'(\#_{1}'(a),\#_{2}'(c),\#(b))$ by defining a new function that is equivalent to it:
\[
f'(\#_{1}'(a),\#_{2}'(c),\#(b))\triangleq f(\#_{1}(a,b),\#_{2}(b,c)).
\]

We can now distinguish between different \emph{representations} of these MMFs with overlapping count scopes.

\begin{defn}[Shattered Representation]
The \emph{shattered representation} of $f$ is the 
representation of $f'$, i.e.
\[
f(\#_{1}(a,b),\#_{2}(b,c))\triangleq\mathfrak{f}(k_{1},k_{2},k_{3})
\]
where $k_{1}:=a$, $k_{2}:=c$, $k_{3}:=b$ and $\mathfrak{f}:\{0,1\}\times\{0,1\}\times\{0,1\}\rightarrow\mathbb{R}$.
\end{defn}

We introduce a novel \emph{redundant representation} of $f$.  Redundant representations retain compactness with many overlapping count scopes.  This becomes relevant when we introduce operations on MMFs (e.g., for variable elimination) in later sections of the paper.

\begin{defn}[Redundant Representation] The redundant representation
of MMF $f(\#_{1}(a,b),\#_{2}(b,c))$ is a function $\mathfrak{f}:\{0,1,2\}\times\{0,1,2\}\rightarrow\mathbb{R}$:
\[
f(\#_{1}(a,b),\#_{2}(b,c))\triangleq\mathfrak{f}(k_{1},k_{2})
\]
where $k_{1}:=a+b$ and $k_{2}:=b+c$.
\end{defn}

If we choose to store MMFs with redundant representations, we may introduce incompatible assignments to variables that appear in overlapping count scopes.  The following definition formalizes this observation.

\begin{defn}[Consistent Count Combination]
Let $\#_{1}\{A,B\},\#_{2}\{B,C\}$ be two CAs with overlapping count
scopes. We say that a pair $\left(k_{1},k_{2}\right)$ is a \emph{consistent
count combination (consistent CC) for $\#_{1},\#_{2}$ }if and only
if there exists an assignment $\left(a,b,c\right)$ such that 
$\left(k_{1},k_{2}\right)=\left(\#_{1}(a,b),\#_{2}(b,c)\right)$. 
If no such $\left(a,b,c\right)$ exists, then $\left(k_{1},k_{2}\right)$
is called an \emph{inconsistent} CC.  Further, let $f(\#_{1},\#_{2})$
be a MMF. We say that a consistent CC $\left(k_{1},k_{2}\right)$
for $\#_{1},\#_{2}$ is a \emph{consistent entry} $\mathfrak{f}(k_{1},k_{2})$
of the representation of $f$. Similarly, if $\left(k_{1},k_{2}\right)$
is an inconsistent CC, then $\mathfrak{f}(k_{1},k_{2})$ is referred
to as an \emph{inconsistent entry}.
\end{defn}

Inconsistent entries can only occur in redundant representations
since the shattered representation of $f$ is defined for $f'$ without overlapping count scopes.
%
Even though redundant representations appear to have a disadvantage
since they contain inconsistent entries, they also have a big advantage:
as we show next, they can be exponentially more compact than shattered
ones.  Moreover, as detailed in the rest of this document, the disadvantage
of the inconsistent entries can be avoided altogether by making
sure that we never query such inconsistent entries in our algorithms.
\begin{lem}
Consider MMF $f:\mathbf{X}\times\mathbf{Z}_{1}\times\ldots\times\mathbf{Z}_{N}\rightarrow\mathbb{R}$, $N\geq2$. Let $\mathbf{Z=}\bigcup_{i=1}^{N}\mathbf{Z}_{i}$. In the
worst case, a partition of $\mathbf{Z}$ requires $p=\min\{2^{N}-1,|\mathbf{Z}|\}$
splits into mutually disjoint sets and the shattered representation
of $f$ is of size $O(S^{p})$ where $S$ is an upper bound on the
resulting set sizes. The same function has a redundant representation
of size $O(K^{N})$ where $K$ is an upper bound on $|\mathbf{Z}_{i}|+1$.
\end{lem}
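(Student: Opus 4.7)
The strategy is to decompose the claim into three independent pieces: (i) a combinatorial bound on the number of parts $p$ in any partition of $\m{Z}$ that refines every $\m{Z}_i$; (ii) the resulting size of the shattered representation; (iii) the size of the redundant representation. Parts (ii) and (iii) will be straightforward table-counting arguments once (i) is in place.

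For (i), I would work with the canonical refinement of $\{\m{Z}_1,\ldots,\m{Z}_N\}$ given by the ``Venn atoms'': for each nonempty $I\subseteq\{1,\ldots,N\}$, set
\[
\m{Y}_I \;=\; \Bigl(\bigcap_{i\in I}\m{Z}_i\Bigr) \;\setminus\; \Bigl(\bigcup_{j\notin I}\m{Z}_j\Bigr).
\]
The nonempty $\m{Y}_I$'s form a partition of $\m{Z}$, and each $\m{Z}_i$ decomposes cleanly as $\m{Z}_i=\bigcup_{I\ni i}\m{Y}_I$, so this partition refines every count scope. There are at most $2^N-1$ such atoms, and trivially at most $|\m{Z}|$ nonempty ones, giving the upper bound $p\leq\min\{2^N-1,|\m{Z}|\}$. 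To see that this is tight in the worst case, I would exhibit an instance where every atom is nonempty (e.g., when $|\m{Z}|\geq 2^N-1$, place one distinct variable in each atom by construction); then any partition that refines all $\m{Z}_i$'s must separate variables from different atoms, so no coarser partition suffices.

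For (ii), fix such a refining partition $\m{Y}_{I_1},\ldots,\m{Y}_{I_p}$. Because each original CA satisfies $\#_i(\m{z}_i)=\sum_{I\ni i}\#_{\m{Y}_I}(\m{y}_I)$, the function $f$ can be rewritten as a function $f'$ of the $p$ atom-counts, which is exactly the shattered representation. Its table has one entry per tuple of atom-counts, each ranging over $\{0,\ldots,|\m{Y}_{I_j}|\}$, so the total size is $\prod_{j=1}^p(|\m{Y}_{I_j}|+1)\leq(S+1)^p=O(S^p)$. For (iii), the redundant representation simply stores one entry per tuple $(k_1,\ldots,k_N)$ with $k_i\in\{0,\ldots,|\m{Z}_i|\}$, independent of whether the tuple is a consistent CC, giving size $\prod_{i=1}^N(|\m{Z}_i|+1)\leq K^N$.

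The main obstacle is part (i): both directions of the $\min$ must be handled, and one has to argue carefully what ``requires $p$ splits'' means. The upper bound is clean via the Venn-atom construction, but the worst-case tightness argument requires constructing (or at least exhibiting the existence of) an instance in which all $2^N-1$ atoms are simultaneously nonempty and no refining partition can do with fewer parts. Once (i) is nailed down, the representational size bounds in (ii) and (iii) reduce to elementary products and the asymptotic comparison $K^N \ll S^{2^N-1}$ explains why the redundant encoding can be exponentially more compact.
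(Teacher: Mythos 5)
Your proposal is correct, but note that the paper does not actually supply a proof of this lemma at all: it is stated without a \verb|proof| environment and is followed only by the worked example with $\#_{1}\{A,B,C,D,E\}$, $\#_{2}\{A,B,X,Y,Z\}$, $\#_{3}\{A,C,W,X\}$ (seven atoms, $288$ vs.\ $180$ parameters), which is an instance of your construction rather than an argument. Your Venn-atom decomposition $\m{Y}_I=\bigl(\bigcap_{i\in I}\m{Z}_i\bigr)\setminus\bigl(\bigcup_{j\notin I}\m{Z}_j\bigr)$ is the natural formalization, and the one step that genuinely needs to be said out loud --- and that you do say --- is that any partition in which every $\m{Z}_i$ is a union of parts must \emph{refine} the atom partition (each part lies in exactly one atom), so the atoms give the coarsest admissible shattering and hence the count $p$ is forced, with $p=\min\{2^N-1,|\m{Z}|\}$ attained when every atom is nonempty. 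The identity $\#_i(\m{z}_i)=\sum_{I\ni i}\#_{\m{Y}_I}(\m{y}_I)$ justifies that $f$ factors through the atom counts, and the two table sizes $\prod_j(|\m{Y}_{I_j}|+1)\leq (S+1)^p$ and $\prod_i(|\m{Z}_i|+1)\leq K^N$ follow by direct counting, exactly as you argue. The only cosmetic gap is that the tightness direction for the $|\m{Z}|$ branch of the $\min$ (when $|\m{Z}|<2^N-1$) deserves the one-line remark that one can give each variable a distinct membership signature so that every atom is a singleton; otherwise the argument is complete and fills a hole the paper leaves open.
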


\begin{expl} 
Consider MMF $f(\#_{1}\{A,B,C,D,E\},$ $\#_{2}\{A,B,X,Y,Z\},$
$\#_{3}\{A,C,W,X\})$ with overlapping count scopes. The redundant
representation of $f$ requires $6\cdot6\cdot5=180$ parameters but
contains \emph{inconsistent entries}. The shattered representation
defined using equivalent MMF $f'(\#\{A\},\#\{B\},$ $\#\{C\},\#\{D,E\},\#\{X\},$ $\#\{W\},\#\{Y,Z\})$ requires $288$ parameters. 
\end{expl} 
Note that, in general, the difference in size between shattered and un-shattered representations can be made arbitrarily large. 
We now show how mixed-mode functions with compact redundant representations can be exploited during variable elimination and during constraint generation in the ALP.

\section{Efficient Variable Elimination}
\label{sec:4}

Here we describe how $\Aug$ and $\Red$ are efficiently implemented to work \emph{directly} on the redundant representations of MMFs.  Our goal is to leverage compact representation throughout the VE algorithm, i.e., to avoid shattering of function scopes if possible.  
Particular care has to be taken to ensure correctness since we observed previously that reduced representations contain inconsistent entries.

\subsubsection{Augment} 
$\Aug$ takes a set of MMFs and adds them together. We implement this
operation directly in the redundant representation.  $\Aug(\mathfrak{g},\mathfrak{h})$
returns a function $\mathfrak{f}$ that is defined as: $\forall x,y,k_{1}\in\{0,\ldots,N_{1}\},k_{2}\in\{0,\ldots,N_{2}\}$
\begin{equation}
\mathfrak{f}(x,y,k_{1},k_{2})=\mathfrak{g}(x,k_{1})+\mathfrak{h}(y,k_{2}).\label{eq:RR-Augment}
\end{equation}
The implementation simply loops over all $x,y,k_{1},k_{2}$ to compute all entries 
(which may be consistent or inconsistent).

\subsubsection{Reduce}
$\Red$ removes a variable by maxing it out. 
Here we show how this operation is implemented for MMFs directly using the redundant representation.
Let $g(x,y,z,\#_{1}(a,b,z),\#_{2}(b,c))$ be a MMF with redundant representation $\mathfrak{g}(x,y,z,k_{1},k_{2})$. We discriminate different cases:
\begin{enumerate}
\item 

\emph{Maxing out a proper variable:} If we max out $x$, 
$
\mathfrak{f}(y,z,k_{1},k_{2})\triangleq\max\left\{ \mathfrak{g}(0,y,z,k_{1},k_{2}),\mathfrak{g}(1,y,z,k_{1},k_{2})\right\}
$

\item 
\emph{Maxing out a non-shared count variable:}
If we max out $a$,
$
\mathfrak{f}(x,y,z,k_{1},k_{2})\triangleq\max \{ \mathfrak{g}(x,y,z,k_{1},k_{2}),
$
$
\mathfrak{g}(x,y,z,k_{1}+1,k_{2})\}.
$
The resulting function has signature
$f(x,y,z,\#_{1}'(b,z),\#_{2}'(b,c))$.
The values of $x,y,z,b,c$ are fixed (by the l.h.s. of the definition) 
in such a way that $\#_{1}'(b,z)=k_{1}$
and $\#_{2}'(b,c)=k_{2}$. The maximization that we perform over $a\in\{0,1\}$
therefore has the ability to increase $k_{1}$ by 1 or not, which
leads to the above maximization in the redundant representation. 

\item 
\emph{Maxing out a shared count variable:}
If we max out $b$,
$
\mathfrak{f}(x,y,z,k_{1},k_{2})\triangleq\max\{ \mathfrak{g}(x,y,z,k_{1},k_{2}), \mathfrak{g}(x,y,z,$ $k_{1}+1,k_{2}+1)\}
$
This is similar to the previous case, but since $b$ occurs in both $\#_{1}'$ and
$\#_{2}'$, it may either increase both $k_{1}$ and $k_{2}$, or neither.

\item 
\emph{Maxing out a shared proper/count variable:}
In case we max out $z$,
$
\mathfrak{f}(x,y,k_{1},k_{2})\triangleq\max\{ \mathfrak{g}(x,y,0,k_{1},k_{2}),$ $\mathfrak{g}(x,y,1,k_{1}+1,k_{2})\}.
$
Since $z$ occurs as both proper and count variable (in $\#_{1}$),
a choice of $z=1$ also increments $k_{1}$ by 1 while $z=0$ does
not.
\end{enumerate}

\subsubsection{Correctness of RR-VE}
We refer to VE with the elementary operations defined as above as \emph{redundant representation VE (RR-VE)}.  RR-VE is correct, i.e., it arrives at the identical solution as VE using the full tabular representation of intermediate functions.
The proof depends on the following two lemmas:
\begin{lem}\label{lem:rr-ve-aug} When the input functions are correctly defined on their
consistent entries, $\Aug$ is correct.\end{lem}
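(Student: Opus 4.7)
The plan is to show that for every \emph{consistent} entry of the redundant representation $\mathfrak{f}$ returned by $\Aug(\mathfrak{g},\mathfrak{h})$, the value $\mathfrak{f}(\cdot)$ coincides with $(g+h)$ evaluated at any underlying binary assignment that witnesses the consistency. Inconsistent entries carry no semantic content and may hold arbitrary values, so they need not be addressed.

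First, I would unfold the notion of consistency for the output. In the canonical case treated after Definition~\ref{def:mmfi}, the combined function $\mathfrak{f}$ inherits a scope that is the union of the proper and count scopes of $g$ and $h$; an entry $(x,y,k_1,k_2,\dots)$ is consistent iff there exists a joint assignment $\mathbf{v}$ to the underlying binary count variables of $g$ and $h$ such that each count aggregator applied to $\mathbf{v}$ produces its corresponding $k_i$. Fixing such a witness $\mathbf{v}$, I would next observe that the restriction $\mathbf{v}|_{\mathbf{Z}_g}$ is by construction a witness of $(x,k_1)$-consistency for $\mathfrak{g}$, and symmetrically $\mathbf{v}|_{\mathbf{Z}_h}$ is a witness for $\mathfrak{h}$ at $(y,k_2)$, since the value of each count aggregator depends only on the variables in its scope.

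Third, I would invoke the hypothesis of the lemma: because $\mathfrak{g}$ and $\mathfrak{h}$ are correct on their consistent entries, $\mathfrak{g}(x,k_1) = g(\mathbf{v}|_{\mathbf{X}_g\cup \mathbf{Z}_g})$ and $\mathfrak{h}(y,k_2) = h(\mathbf{v}|_{\mathbf{X}_h\cup \mathbf{Z}_h})$. Adding these equalities and applying the defining Equation~\ref{eq:RR-Augment} yields
\[
\mathfrak{f}(x,y,k_1,k_2) \;=\; \mathfrak{g}(x,k_1) + \mathfrak{h}(y,k_2) \;=\; (g+h)(\mathbf{v}),
\]
which is exactly what correctness on consistent entries requires. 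A brief well-definedness argument closes the loop: if two different witnesses $\mathbf{v},\mathbf{v}'$ produce the same counts, then $g$ and $h$ take the same values at both (by the MMF property that they depend only on the counts of their count-variables and on their proper variables), so the right-hand side above is independent of the choice of witness.

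The principal obstacle is purely bookkeeping: tracking how shared proper/count variables and overlapping count scopes between $g$ and $h$ are reflected in the combined witness, so that a single global assignment $\mathbf{v}$ genuinely restricts to valid witnesses for each input. Once one commits to the convention that the output's consistent entries are exactly those realizable by \emph{one} underlying binary assignment (rather than two independent ones), the proof becomes a one-line invocation of pointwise additivity. I expect no genuine mathematical difficulty beyond this scope-alignment step.
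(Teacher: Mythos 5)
Your proposal is correct and follows essentially the same route as the paper's proof: restrict attention to consistent entries of the output, take a witnessing assignment, observe that its restrictions give consistent (hence correct, by hypothesis) entries of the inputs, and conclude by the pointwise additivity in Equation~\ref{eq:RR-Augment}. Your added well-definedness remark (independence of the witness) is implicit in the paper's quantification over all underlying assignments, so nothing substantive differs.
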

\begin{proof}The implementation of $\Aug$ as given in \eqref{eq:RR-Augment}
may lead to an $\mathfrak{f}$ that contains inconsistent entries.
In particular, this will happen when the scopes of $\#_{1},\#_{2}$
contain a shared variable. However, we will make sure that when using $f$ later on, we will only query consistent entries. That
is, when querying the function $f(x,y,\#_{1}(a,b),\#_{2}(b,c))$ via
arguments $(x,y,a,b,c)$ we will never retrieve such inconsistent
entries. Therefore, we only need to show that the consistent entries
are computed correctly. In particular, we need to show that 
\begin{align*}
\forall_{x,y,a,b,c}\quad f(x,y,\#_{1}(a,b),\#_{2}(b,c)) & =g(x,\#_{1}(a,b))+h(y,\#_{2}(b,c))\\
 & =\mathfrak{g}(x,a+b)+\mathfrak{h}(y,b+c)\\
\text{\{we define }k_{1}:=a+b,\,k_{2}:=b+c\text{ \}} & =\mathfrak{g}(x,k_{1})+\mathfrak{h}(y,k_{2})
\end{align*}
which is exactly the value that \eqref{eq:RR-Augment} computes for
(consistent) entry $\mathfrak{f}(x,y,k_{1},k_{2})$. The only thing
that is left to prove, therefore, is that the entries that we query
for $\mathfrak{g},\mathfrak{h}$ are also consistent. However, since
we access these input function using the same count combination $(k_{1},k_{2})$,
which is a consistent CC resulting from $x,y,a,b,c$, this must be
the case per the assumption stated in the Lemma. \end{proof}

\begin{lem}\label{lem:rr-ve-red} When the input functions are correctly defined on their
consistent entries, $\Red$ is correct.\end{lem}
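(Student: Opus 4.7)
My plan is to mirror the structure of the proof of Lemma~\ref{lem:rr-ve-aug} and handle each of the four cases of $\Red$ in turn. In each case, I need to verify two things: (i) that the formula given for $\mathfrak{f}$ evaluated at a consistent entry equals the true $\max$ over the eliminated variable of the function $g$, and (ii) that the entries of $\mathfrak{g}$ that this formula queries are themselves consistent CCs for $g$'s count scopes, so that the assumption of the lemma applies and those lookups return correct values. Inconsistent entries produced in $\mathfrak{f}$ itself can be ignored, exactly as in Lemma~\ref{lem:rr-ve-aug}, because downstream operations will only query $f$ at consistent CCs.

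I would start with Case 1 (proper variable), which is essentially bookkeeping: no count scope changes, and for any consistent CC $(k_1,k_2)$ of $f$'s scopes, the tuples $(0,y,z,k_1,k_2)$ and $(1,y,z,k_1,k_2)$ remain consistent CCs of $g$'s scopes, so the max formula trivially matches $\max_x g(x,y,z,a,b,c)$. Case 2 (non-shared count variable $a$) is the first nontrivial case: the resulting MMF has count scopes $\#_1'(b,z), \#_2'(b,c)$, and for a consistent CC $(k_1,k_2) = (b+z, b+c)$ of $f$, setting $a=0$ and $a=1$ yields the tuples $(a+b+z, b+c) \in \{(k_1,k_2),(k_1+1,k_2)\}$, both of which correspond to assignments $(a,b,c,z)$ and are therefore consistent for $g$. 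Thus $\max\{\mathfrak{g}(x,y,z,k_1,k_2),\mathfrak{g}(x,y,z,k_1+1,k_2)\}$ equals $\max_a g$ at that consistent entry.

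Cases 3 and 4 are the ones I expect to require the most care, since they involve a variable that participates in multiple counters or simultaneously as a proper and count variable, so toggling it may shift several $k_i$'s at once. For Case 3, $b$ appears in both $\#_1$ and $\#_2$, so $b=0$ fixes $(k_1,k_2) = (a+z, c)$ and $b=1$ shifts both coordinates up by one, giving $(k_1+1,k_2+1)$; both CCs are realized by explicit assignments and are consistent, which justifies the formula. For Case 4, $z$ is both a proper variable of $g$ and a count variable of $\#_1$, so the two branches of the max become $(z{=}0, k_1, k_2)$ and $(z{=}1, k_1+1, k_2)$, again both consistent as witnessed by concrete $(a,b,c,z)$-assignments. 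In each case I finish by invoking the inductive assumption that $\mathfrak{g}$ is correct on its consistent entries.

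The main obstacle, and the part that drove the earlier proof for $\Aug$, is bookkeeping the consistency side condition: each formula must be checked not only for semantic correctness on the l.h.s.\ but also for the fact that the r.h.s.\ queries \emph{only} consistent entries of $\mathfrak{g}$. Once that is verified case by case, the lemma follows immediately since inconsistent output entries are never queried by subsequent operations, exactly as established for $\Aug$. No new ideas beyond those in Lemma~\ref{lem:rr-ve-aug} are required; the work is purely in exhibiting, for every case, explicit witness assignments that certify the consistency of the retrieved count combinations.
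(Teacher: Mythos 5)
Your proposal is correct and follows essentially the same route as the paper's own proof: the same four-case analysis, the same verification that each branch of the max queries only consistent count combinations of $\mathfrak{g}$ (witnessed by explicit assignments), and the same observation that inconsistent entries of the output can be ignored because downstream operations never query them. No substantive differences.
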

\begin{proof}In Appendix~\ref{app:a}.\end{proof}

\begin{thm}
    RR-VE is correct, i.e., it
arrives at the identical solution as VE using the full tabular representation
of intermediate functions.  
\end{thm}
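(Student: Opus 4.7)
The plan is to prove this by structural induction on the execution trace of the RR-VE algorithm in Figure~\ref{fig:alg-ve}, leveraging the two prior lemmas as the inductive machinery. Specifically, I would introduce the invariant: after every iteration of the outer \textbf{for}-loop, each function $f \in \mathcal{F}$ maintained in the redundant representation is correct on its consistent entries, meaning $\mathfrak{f}(\m{k})$ equals the true value $f(\m{x},\m{z})$ of the underlying MMF whenever $\m{k}$ is a consistent CC produced by some assignment $(\m{x},\m{z})$. The desired conclusion—that the scalar returned by RR-VE matches the scalar returned by tabular VE—then follows by observing that the final $\Aug$ in the return line operates on empty-scope functions for which all entries are trivially consistent.

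For the base case, the initial set $\mathcal{F}$ consists of the MMFs supplied as input; by the construction of their redundant representations in the previous section, every consistent entry $\mathfrak{f}(\m{x},\m{k})$ is defined to coincide with $f(\m{x},\m{z})$ for any $(\m{x},\m{z})$ yielding $\m{k}$, so the invariant holds vacuously at the start. For the inductive step, I would fix an iteration $i$ where the algorithm collects $\mathcal{E} \subseteq \mathcal{F}$, forms $f = \Aug(\mathcal{E})$, and then $e = \Red(f, X_l)$. By Lemma~\ref{lem:rr-ve-aug}, since all functions in $\mathcal{E}$ satisfy the invariant by the inductive hypothesis, $f$ is correct on its consistent entries; applying Lemma~\ref{lem:rr-ve-red} to $f$ then shows $e$ is correct on its consistent entries. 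Since $\mathcal{F}$ is updated as $\mathcal{F} \cup \{e\} \setminus \mathcal{E}$, and all remaining functions were untouched, the invariant is restored.

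To close the argument, I would note that after $|\mathcal{O}|$ iterations every variable has been eliminated, so all functions in $\mathcal{F}$ have empty scope and are scalars; applying $\Aug$ to them is just ordinary addition of numbers, and by the invariant these scalars equal the corresponding intermediate values that would arise in the tabular VE trace under the same elimination order. Since tabular VE with the same elimination order is known to correctly compute $\max_{\m{x}} \sum_{f \in \mathcal{F}_{0}} f(\m{x})$ (a standard result, e.g., \cite{Koller+Friedman09:PGM}), RR-VE returns the same value.

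The main obstacle is ensuring that during the inductive step no inconsistent entry is ever implicitly consulted in a way that could contaminate the result. This subtlety is already encapsulated by the two lemmas: the $\Aug$ and $\Red$ implementations of Section~4 compute new entries by \emph{decomposing} a count combination in the output into count combinations in the inputs, and one must verify that whenever the output CC is consistent, the input CCs reached by the recipe (e.g., $(k_1, k_2)$, or $(k_1+1,k_2)$, $(k_1,k_2+1)$, $(k_1+1,k_2+1)$ in the shared-count case of $\Red$) are themselves consistent. Since Lemmas~\ref{lem:rr-ve-aug} and~\ref{lem:rr-ve-red} already discharge exactly this obligation for a single step, the theorem reduces to a clean induction without any further case analysis.
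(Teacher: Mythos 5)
Your proof is correct and takes essentially the same route as the paper: both arguments reduce the theorem to Lemmas~\ref{lem:rr-ve-aug} and~\ref{lem:rr-ve-red}, which guarantee that $\Aug$ and $\Red$ are correct on consistent entries and never consult inconsistent ones. Your version merely makes explicit the induction over loop iterations (invariant, base case, final empty-scope $\Aug$) that the paper's proof leaves implicit, which is a slightly more careful rendering of the same argument.
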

\begin{proof}We have shown that $\Aug$ is correct for consistent
entries in $\mathfrak{f}$ (Lemma~\ref{lem:rr-ve-aug}). The VE algorithm passes the result of
$\Aug$ to $\Red$ (see Figure \ref{fig:alg-ve}). In any of the cases
implemented by $\Red$, only consistent count combinations  for counters
$\#_{1}$ and $\#_{2}$ are computed. It follows that the max operation
will be over consistent entries in $\mathfrak{g}$, which are correct.
Since there are no other modifications to the regular VE algorithm,
RR-VE is correct.\end{proof}

\section{Exploiting Anonymity in the ALP}
\label{sec:5}
The results for RR-VE can be exploited in the ALP solution method that was introduced in Section~\ref{sec:effsoln}.
The non-linear max constraint in Equation~\ref{eqn:guestrin} is defined over functions $c_i \triangleq \gamma g_i-h_i\; \forall h_i\in H$ and reward factors $R_j, j=1,\ldots,r$, which are all locally-scoped and together span a factor graph.  As outlined previously, a VE procedure over this factor graph can translate the non-linear constraint into a set of linear constraints that is reduced compared to the standard formulation of the ALP.

The key insight of this section is that for a class of factored (M)MDPs defined with count aggregator functions in the 2TBN (i.e., precisely those that support \emph{anonymous influence}), 
the same intuition about reduced representations as in the previous section applies to implement the non-linear max constraint even more compactly.  We showed previously for the running example of disease control how anonymity naturally arises in the transition model of the stochastic process.

We first establish that basis functions $h_i\in H$, when back-projected through the 2TBN (which now includes mixed-mode functions), retain correct basis back-projections $g_i$ with reduced representations.  
The basis back-projection is computed with summation and product operations only (Equation~\ref{eqn:backprop}).  We have previously shown that summation (\Aug) of mixed-mode functions is correct for its consistent entries.  The same result holds for multiplication when replacing the sum operation with a multiplication.  It follows that $g_i$ (and $c_i$) share the compact reduced representations derived in Section~\ref{sec:3} and that they are correctly defined on their consistent entries.

The exact implementation of the max constraint in Equation~\ref{eqn:guestrin} with RR-VE proceeds as for the regular VE case.  All correctness results for RR-VE apply during the computation of the constraints in the RR-ALP.  
The number of variables and constraints is exponential only in the \emph{size of the representation} of the largest mixed-mode function formed during RR-VE.  Further, the representation with the smaller set of constraints is exact and yields the identical value function solution as the ALP that does not exploit anonymous influence.

\section{Experimental Evaluation}
We evaluate our methods on undirected disease propagation graphs with 30 and 50 nodes.  For the first round of experiments, we 
contrast runtimes of the normal VE/ALP method (where possible) with those that exploit ``anonymous influence'' in the graph.  Since the obtained value functions for both methods are identical, the focus of this evaluation is on runtime performance over a sampled set of random graphs.  
We then consider a disease control problem with 25 agents in a densely connected 50-node graph that cannot be solved with the normal ALP.  Problems of this size ($|\mathcal{S}|=2^{50}$, $|\mathcal{A}|=2^{25}$) are prohibitively large for exact solution methods to apply and are commonly solved heuristically.  To assess quality of the RR-ALP solution, we evaluate its policy performance against a 
vaccination heuristic in simulation. 

In all experiments, we use indicator functions $I_{X_i}$, $I_{\bar{X}_i}$ on each state variable (covering the two valid instantiations $\{\text{healthy},\text{infected}\}$) as the basis set $H$ in the (RR-)ALP.  We use identical transmission and node recovery rates throughout the graph, $\beta=0.6$, $\delta=0.3$.  Action costs are set to $\lambda_1=1$ and infection costs to $\lambda_2=50$.  All experiments use the identical greedy elimination heuristic for both VE and RR-VE, which minimizes the scope size of intermediate terms at the next iteration.

\subsubsection{Runtime Comparison}
We use graph-tool \cite{PeixotoGT2014} to generate 10 random graphs with an out-degree $k$ sampled from $P(k)\propto 1/k$, $k\in [1,10]$.  Out-degrees per node thus vary in $[1,10]$; the mean out-degree in the graphs in the test set ranges from 2.8 (graph 1) to 4.2 (graph 10).  Figure~\ref{fig:graphs} illustrates a subset of the resulting networks.

The runtime results comparing the VE/ALP method to RR-VE/RR-ALP are summarized in Table~\ref{tbl:resfirst}.  Shown are the number of constraints for each method, the wall-clock times for VE to generate the constraints, and the ALP runtimes to solve the value function after the constraints have been computed.
The last three columns show the relative magnitude of each measure, i.e. the gains in efficiency of the methods exploiting anonymous influence in each of the 10 random graphs.  On average, the RR-ALP solution time reduces to 16\% of the original ALP runtime while maintaining the identical solution.  Reductions by a factor of 50 are observed for two of the random graphs in the set (corresponding to the highlighted entries in the last column).
\begin{figure}[t]
  \centering
    \begin{tabular}{c@{\hskip 0.1cm}c@{\hskip 0.1cm}c@{\hskip 0.1cm}c}
      \includegraphics[width=0.18\textwidth]{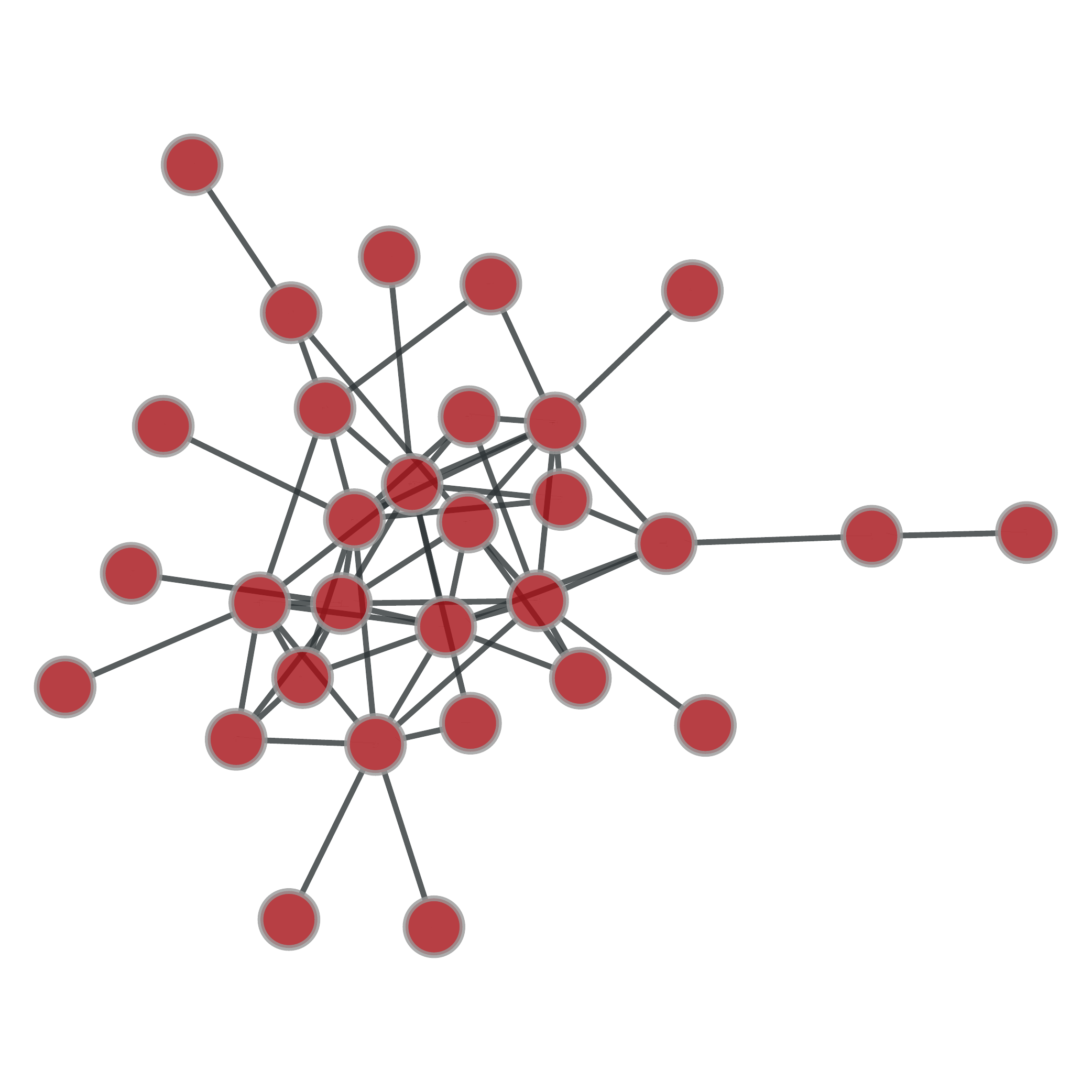} &
      \includegraphics[width=0.18\textwidth]{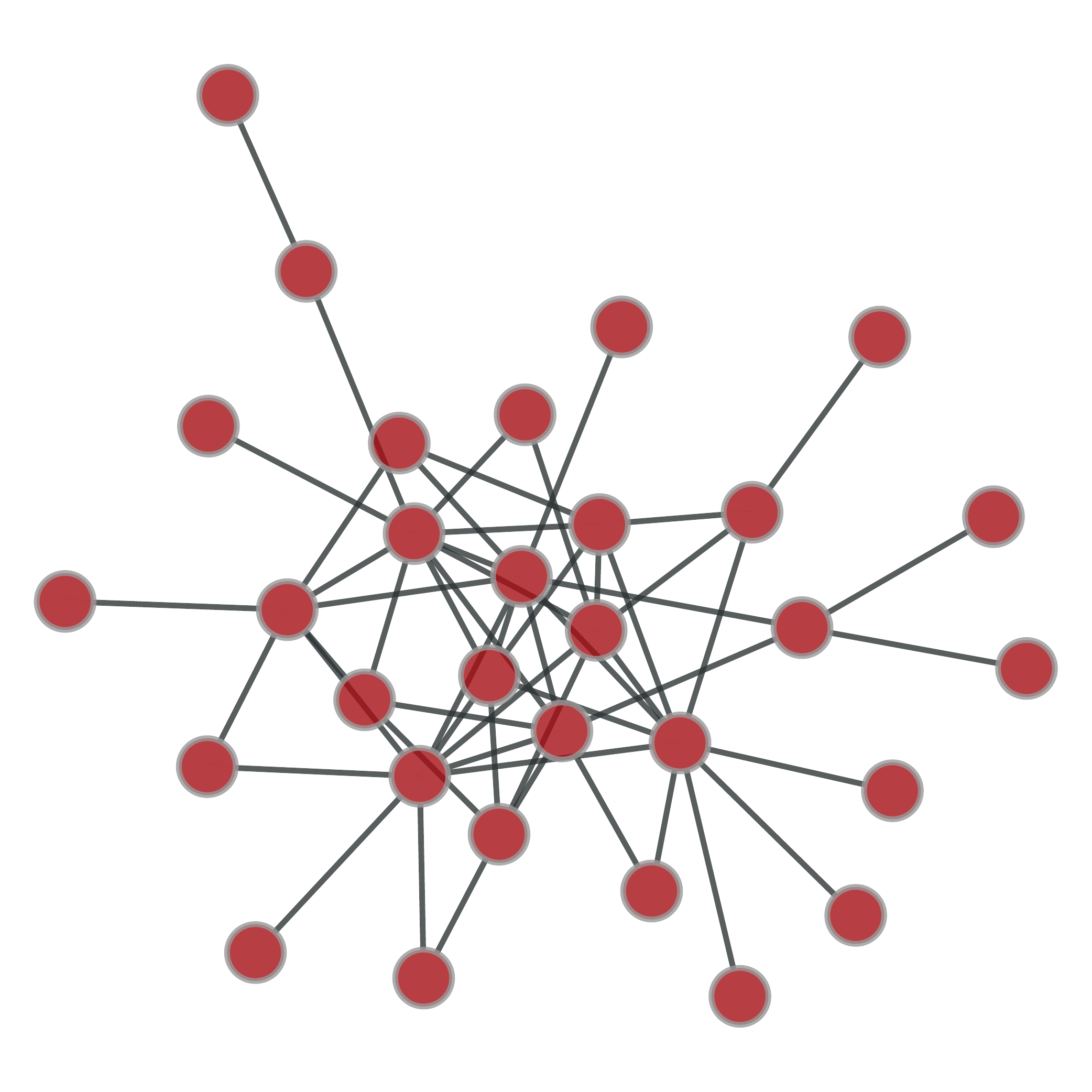} &
      \includegraphics[width=0.18\textwidth]{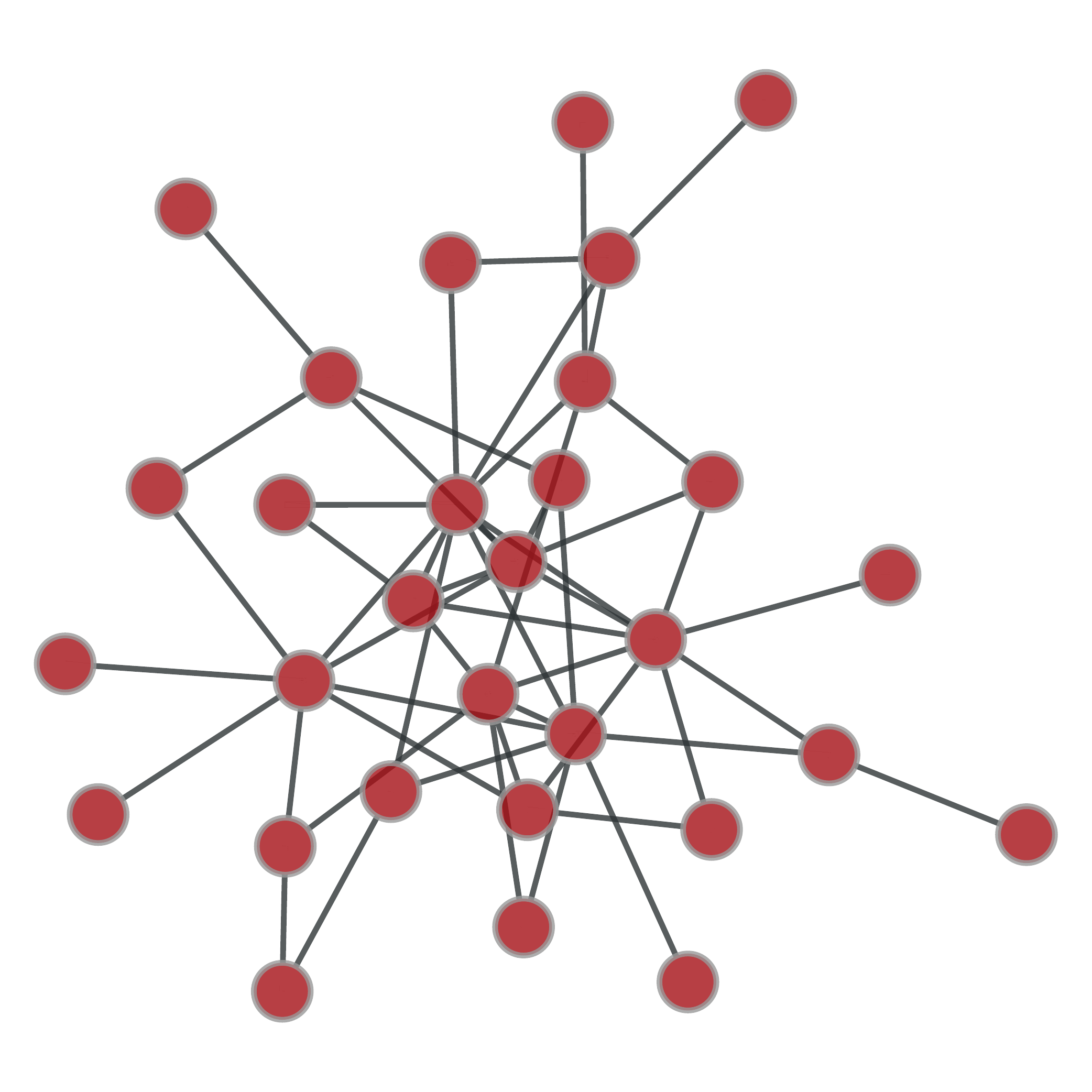} &
      \includegraphics[width=0.18\textwidth]{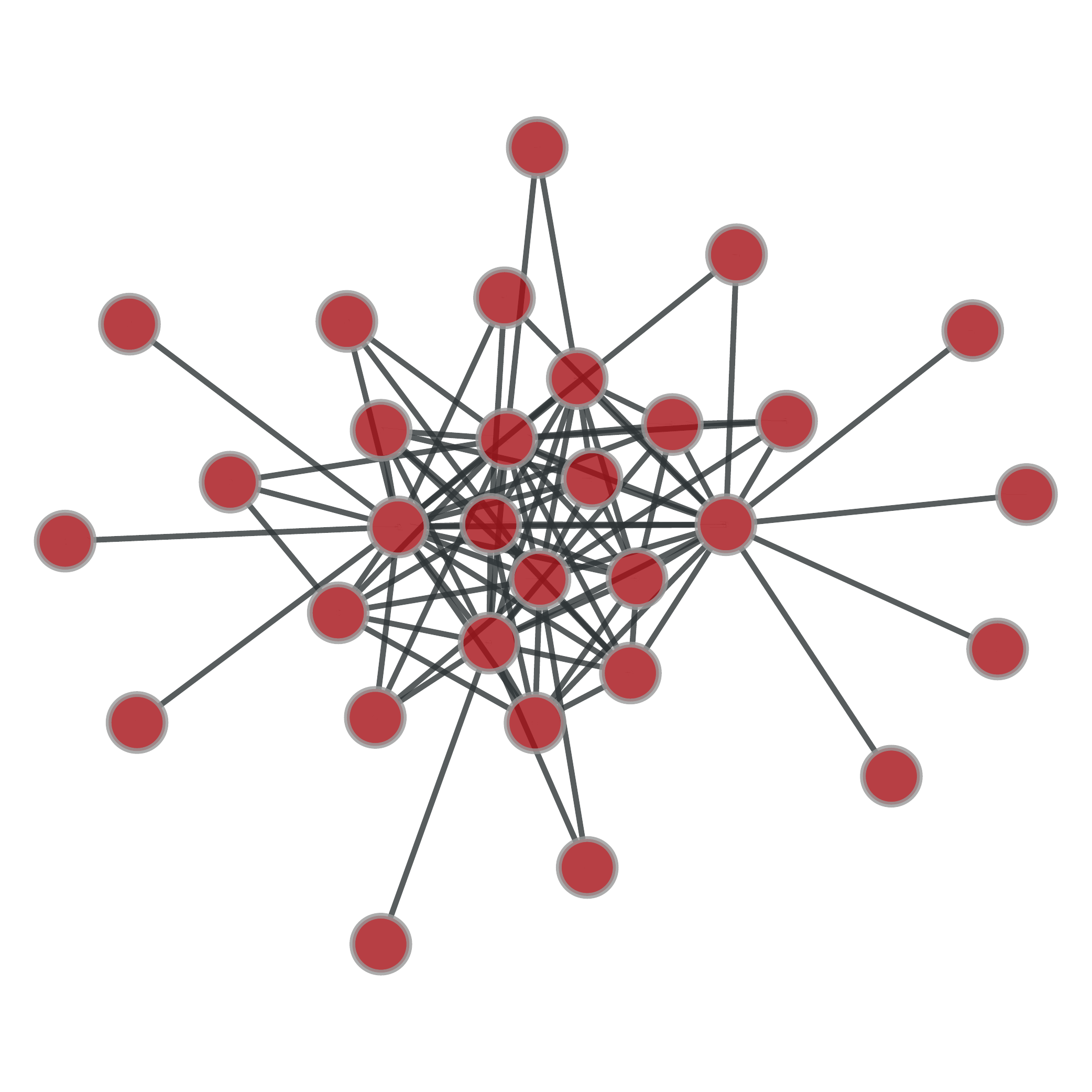} 
    \end{tabular}
  \caption{Sample of three random graphs in the test set with 30 nodes and a maximum out-degree of $10$.  Rightmost: test graph with increased out-degree sampled from $[1,20]$.}
  \label{fig:graphs}
\end{figure}
\begin{table}[!t]
\centering
\resizebox{0.77\columnwidth}{!}{%
\begin{tabular}{ |l l | c c c| }
\hline
$|\m{C1}|$, VE, ALP & $|\m{C}_{RR}|$, RR-VE, RR-ALP & $\dfrac{|\m{C}_{RR}|}{|\m{C1}|}$ & $\dfrac{\textnormal{RR-VE}}{\textnormal{VE}}$ & $\dfrac{\textnormal{RR-ALP}}{\textnormal{ALP}}$ \\
\hline
\hline
131475, 6.2s, \textbf{1085.8s} & 94023, 1.5s, \textbf{25.37s} & 0.72 &  0.24 & \textbf{0.02} \\
24595, 1.1s, 3.59s & 12515, 0.17s, 1.2s & 0.51 &  0.15 & 0.33 \\
55145, 3.5s, 30.43s & 27309, 0.4s, 8.63s & 0.5 &  0.11 & 0.28 \\
74735, 3.0s, 115.83s & 41711, 0.69s, 12.49s & 0.56 &  0.23 & 0.11 \\  
71067, 4.16s, 57.1s & 23619, 0.36s, 8.86s & 0.33 &  0.08 & 0.16 \\
\textbf{24615}, \textbf{1.6s}, 1.15s & \textbf{4539}, \textbf{0.07s}, 0.35s & \textbf{0.18} &  \textbf{0.04} & 0.30 \\
63307, 2.2s, 141.44s & 34523, 0.39s, 4.03s & 0.55 &  0.18 & 0.03 \\ 
57113, 0.91s, \textbf{123.16s} & 40497, 0.49s, \textbf{2.68s} & 0.71 &  0.54 & \textbf{0.02} \\
28755, 0.54s, 17.16 & 24819, 0.36s, 3.86s & 0.86 &  0.67 & 0.22 \\
100465, 2.47s, 284.75s & 38229, 0.62s, 36.76s & 0.38 &  0.25 & 0.13 \\ \hline\hline
\multicolumn{2}{|r|}{Average relative size:} & 0.53 & 0.25 & 0.16 \\
\hline
\end{tabular}%
}
\caption{Constraint set sizes, VE and ALP solution times for normal (column 1) and methods exploiting anonymous influence (column 2).  The last three columns show their relative magnitudes.  Maximal reductions are highlighted in bold.}
\label{tbl:resfirst}
\end{table}

We performed a final experiment with a graph with a larger out-degree ($k$ sampled from the interval $[1,20]$, shown at the right of Figure~\ref{fig:graphs}).  The disease propagation problem over this graph \emph{cannot be solved} with the normal VE/ALP because of exponential blow-up of intermediate terms.  The version exploiting anonymous influence completes successfully, performing constraint computation using RR-VE in 124.7s and generating $|\m{C}_{RR}|=5816731$ constraints.
\begin{figure*}
  \begin{center}
    \resizebox{1\columnwidth}{!}{%
    \begin{tabular}{c@{\hskip 1cm}c}
      \includegraphics{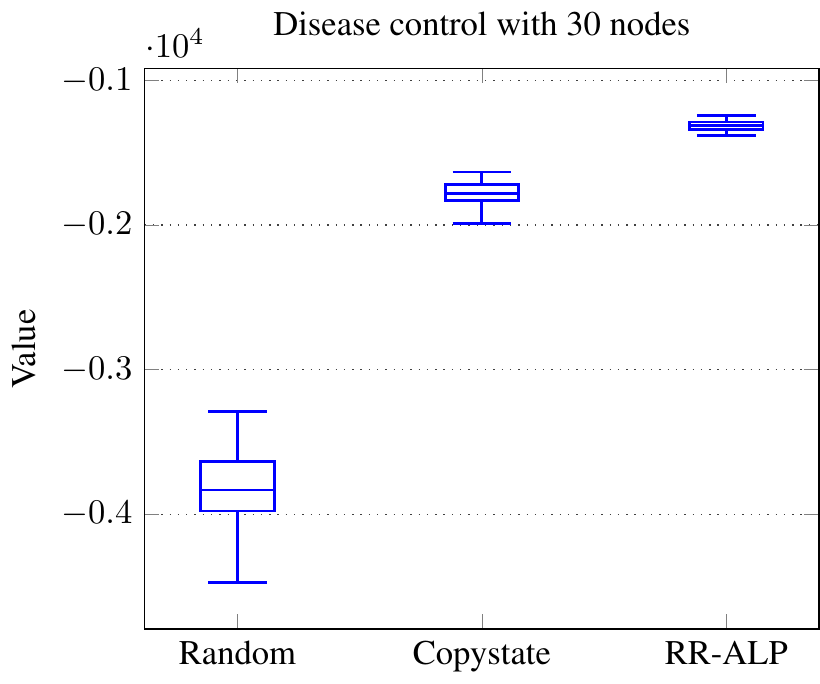} &
      \includegraphics{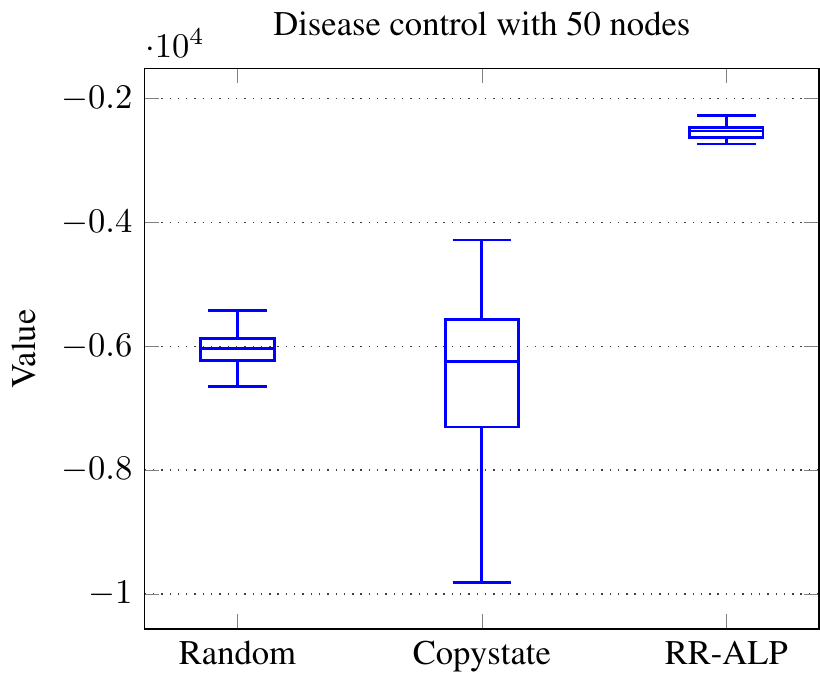}
    \end{tabular}
    }
  \end{center}
  \caption{Statistics of the mean returns of the three evaluated policies: \emph{random}, \emph{``copystate'' heuristic}, and \emph{RR-ALP policy} in the disease control problems.  Mean returns are computed over 50 randomly sampled starting states after 200 steps of policy simulation (each mean return is computed over 50 independent runs from a given $s_0$).  Visualized in the box plots are median, interquartile range ($IQR$), and \protect\raisebox{.2ex}{$\scriptstyle\pm$}$ 1.5\cdot IQR$ (upper and lower whiskers) of the mean returns.}
  \label{fig:dc30}
\end{figure*}

\subsubsection{Policy Performance}
In this section we show results of policy simulation for three distinct policies in the disease control task over two random graphs (30 nodes with 15 agents and 50 nodes with 25 agents, both with a maximum out-degree per node set to 15 neighbors).  The disease control problem over both graphs is infeasible for the regular VE/ALP due to the exponential increase of intermediate factors during VE.  We compare the solution of our RR-ALP method to a random policy and a heuristic policy that applies a vaccination action at $X_i$ if $X_i$ is infected in the current state and belongs to the controlled nodes $V_c$ in the graph.
The heuristic is reactive and does not provide anticipatory vaccinations if some of its parent nodes are infected; we refer to it as the ``copystate'' heuristic in our evaluation.  It serves as our main comparison metric for these large and densely connected graphs where optimal solutions are not available.  

To evaluate the policy performance, we compute the mean returns from 50 randomly sampled starting states $s_0$ after 200 steps of policy simulation (each mean return is computed over 50 independent runs from a given $s_0$).  Figure~\ref{fig:dc30} shows statistics of these mean returns, i.e., each sample underlying a box plot corresponds to a mean estimate from each of the 50 initial states.  The box plots thus
provide an indication of the sensitivity to the initial conditions $s_0$
in the disease graph.

The ``copystate'' heuristic works reasonably well in the 30-node/15-agent problem (left-hand side of Figure~\ref{fig:dc30}) but is consistently outperformed by the RR-ALP solution which can administer anticipatory vaccinations.  This effect actually becomes more pronounced with \emph{fewer} agents: we experimented with 6 agents in the identical graph and the results (not shown) indicate that the ``copystate'' heuristic performs significantly worse than the random policy.
This is presumably because blocking out disease paths early becomes more important with fewer agents since the lack of agents in other regions of the graph cannot make up for omissions later.  

In the 50-node/25-agent scenario the reactive ``copystate'' heuristic does not
provide a statistically significant improvement over a random policy (right-hand
side of Figure~\ref{fig:dc30}).  It is outperformed by the RR-ALP solution by
roughly a factor of 3 in our experiments.  In the same figure it is also apparent 
that the performance of the heuristic depends heavily on the initial state of the disease graph.  

Not shown in the Figure is that the RR-ALP policy also had a smaller variance \emph{within} all simulations from an individual initial state $s_0$, indicating not only better performance in expectation, but also higher reliability in each single case.

\section{Related Work}
\label{sec:7}
Many recent algorithms tackle domains with large (structured) state spaces.  For exact planning in factored domains, SPUDD exploits a decision diagram-based representation \cite{Hoey+al99:UAI}.  Monte Carlo tree search (MCTS) has been a popular online approximate planning method to scale to large domains \cite{Silver+al08:ICML}.  These methods do not apply to exponential action spaces without further approximations.  \citeauthor{Ho+al15:CDC} (\citeyear{Ho+al15:CDC}), for example, evaluated MCTS with three agents for a targeted version of the disease control problem.  Recent variants that exploit factorization \cite{Amato+Oliehoek15:AAAI} may be applicable.

Our work is based on earlier contributions of \citeauthor{Guestrin03:PhD} (\citeyear{Guestrin03:PhD}) on exploiting factored value functions to scale to large factored action spaces.  Similar assumptions can be exploited by inference-based approaches to planning which have been introduced for MASs where policies are represented as finite state controllers \cite{Kumar+al11:IJCAI}.  There are no assumptions about the policy in our approach.  The variational framework of \citeauthor{Cheng+al13:NIPS} (\citeyear{Cheng+al13:NIPS}) uses belief propagation (BP) and is exponential in the cluster size of the graph.  Their results are shown for 20-node graphs with out-degree 3 and a restricted class of chain graphs.  Our method remains exponential in tree-width but exploits anonymous influence in the graph to scale to random graphs with denser connectivity.

Generalized counts in first-order (FO) models eliminate indistiguishable variables in the same predicate in a single operation \cite{Sanner+Boutilier09:AI,Milch+al08:AAAI}.  Our contributions are distinct from FO methods.  Anonymous influence applies in propositional models and to node sets that are not necessarily indistiguishable in the problem.  
We also show that shattering into disjoint counter scopes is not required during VE and show how this results in efficiency gains during VE.  

There is a conceptual link to approaches that exploit anonymity or influence-based abstraction in decentralized or partially-observable frameworks.  \citeauthor{Oliehoek12AAAI_IBA} (\citeyear{Oliehoek12AAAI_IBA}) define influence-based policy abstraction for factored Dec-POMDPs, which formalizes how different \emph{policies} of other agents may lead to the same influence. Roughly stated, this work can be seen to give some justification of the idea of using factored value functions. While optimal influence search for TD-POMDPs \cite{Witwicki+Durfee10:ICAPS,Witwicki12AAMAS} only searches the space of unique influences (which implicitly does take into account the working of aggregation operators), such a procedure is not yet available for general factored Dec-POMDPs, and would require imposing decentralization constraints (i.e., restrictions on what state factors agents can base their actions on) for MMDPs. Our approach, in contrast, does not impose such constraints and provides a more scalable approach for MMDPs by introducing a practical way of dealing with aggregation operators.

Also closely related is the work by \citeauthor{Varakantham+al14:AAAI} (\citeyear{Varakantham+al14:AAAI}) on exploiting agent anonymity in transitions and rewards in a subclass of Dec-MDPs with specific algorithms to solve them.  Our definition of anonymity extends to both action and state variables; our results on compact, redundant representation of anonymous influence further also applies outside of planning (e.g., for efficient variable elimination).

\section{Conclusions and Future Work}
This paper introduces the concept of ``anonymous influence'' in large factored multiagent MDPs and shows how it can be exploited to scale variable elimination and approximate linear programming beyond what has been previously solvable.  
The key idea is that both representational and computational benefits follow from reasoning about influence of variable sets rather than variable identity in the factor graph.  These results hold for both single and multiagent factored MDPs and are exact reductions, yielding the identical result to the normal VE/ALP, while greatly extending the class of graphs that can be solved.  Potential future directions include approximate methods (such as loopy BP) in the factor graph to scale the ALP to even larger problems and to support increased basis function coverage in more complex graphs.

\section*{Acknowledgments}
F.O. is supported by NWO Innovational Research Incentives Scheme Veni \#639.021.336.

\bibliography{influence.arxiv.v2}
\bibliographystyle{aaai}

\appendix

\section{Correctness of Redundant Representation VE}
\label{app:a}

Here we prove Lemma~\ref{lem:rr-ve-red} that is used to show correctness of Redundant Representation VE (RR-VE) in Section~\ref{sec:4}.

\paragraph{Correctness of Reduce}

\begin{lem}When the input functions are correctly defined on their
consistent entries, $\Red$ is correct.\end{lem}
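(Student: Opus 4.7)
The plan is to handle each of the four cases for the variable being maxed out (proper variable, non-shared count variable, shared count variable, and shared proper/count variable) separately, and for each case show two things: (i) the formula given in the $\Red$ definition, when applied to a \emph{consistent} entry $\mathfrak{f}(\ldots,k_1,k_2)$ of the output, queries only \emph{consistent} entries of the input function $\mathfrak{g}$, so that by the assumption these queried values are correct; and (ii) the value produced by the formula equals the semantic max $\max_v g(\ldots,v,\ldots)$, where $v$ is the variable being eliminated. Part~(i) is what makes the lemma nontrivial, because $\mathfrak{g}$ may contain arbitrary (incorrect) values at inconsistent entries.

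Concretely, I would fix a consistent output CC, say $(k_1,k_2)$ for counters $\#_1',\#_2'$ of $f$, and unfold its definition: there exist values of the remaining count-scope variables (e.g. $b,c,z$ in the canonical example) realizing $k_1,k_2$. I would then argue that each $\mathfrak{g}$-entry appearing on the right-hand side of the $\Red$ formula corresponds to one specific choice of the eliminated variable $v\in\{0,1\}$ together with those same witness values, and therefore its CC is realized by an actual assignment and is consistent. For instance, in the shared-count case (maxing out $b$), the entries $\mathfrak{g}(x,y,z,k_1,k_2)$ and $\mathfrak{g}(x,y,z,k_1+1,k_2+1)$ correspond respectively to $b=0$ and $b=1$, both of which shift $\#_1$ and $\#_2$ by the same amount, preserving consistency. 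The shared proper/count case is analogous, with the proper variable $z$ flipping simultaneously in the proper argument and in the count slot that contains it.

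Given (i), the assumption of the lemma tells us that each queried $\mathfrak{g}$-entry equals the intended value $g(x,y,z,\#_1(\ldots),\#_2(\ldots))$ evaluated at the corresponding full assignment. Part~(ii) then reduces to observing that the union over $v\in\{0,1\}$ of the assignments producing the two $\mathfrak{g}$-entries in each formula is exactly the set of assignments obtained by extending the chosen witness $(x,y,z,b,c,\ldots)$ with both possible values of $v$, so the maximum of the two queried entries equals $\max_v g(\cdot)$, which is by definition the consistent entry $\mathfrak{f}(\ldots,k_1,k_2)$ of the reduced function. I would then conclude that $\Red$ produces the correct value on every consistent entry of its output, as required.

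The main obstacle I anticipate is bookkeeping in case~4 (the shared proper/count variable): one has to verify both that the two queried $\mathfrak{g}$-entries are consistent and that together they cover exactly the $v=0$ and $v=1$ extensions of the witnessing assignment, without double-counting or missing cases when $z$ simultaneously appears as a proper variable in $f$ and as a member of some count scope $\#_i$. The other cases essentially fall out of the same template once this one is nailed down.
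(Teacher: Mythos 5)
Your proposal is correct and follows essentially the same route as the paper's appendix proof: a case analysis over the four variable types, in each case fixing a witness assignment for the surviving variables, expanding the max over the eliminated variable into the two branches, and observing that each branch's count combination is realized by an actual assignment and is therefore a consistent (hence, by hypothesis, correct) entry of $\mathfrak{g}$. The only cosmetic difference is direction — the paper quantifies over assignments and reads off the resulting consistent entries, whereas you start from a consistent output entry and unfold a witness — but these are equivalent since the $\Red$ formula depends only on the count values, not on the particular witness.
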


\begin{proof}We need to show that for a function $g(x,y,z,\#_{1}(a,b,z),\#_{2}(b,c))$
if we reduce by maxing out any variable, we indeed get the desired
function $f$. Again, the resulting representation might contain inconsistent
entries, but we only need to show correctness for the consistent entries,
since only those will be queried. This is because the result of $\Red$
only occurs as input to $\Aug$ in Figure \ref{fig:alg-ve} and we
have shown previously that only consistent entries are queried from
these input functions.

We discriminate the different cases:

\subparagraph{Maxing out a Proper Variable }

If we max out $x$, our redundant representation performs the
operation
\begin{equation}
\mathfrak{f}(y,z,k_{1},k_{2})\triangleq\max\left\{ \mathfrak{g}(0,y,z,k_{1},k_{2}),\mathfrak{g}(1,y,z,k_{1},k_{2})\right\} \label{eq:RR-Reduce-propervar-1}
\end{equation}

Assume an arbitrary $y,z,a,b,c$. We need to show that $\mathfrak{f}$
represents function $f$ correctly, i.e. 
\[
f(y,z,\#_{1}(a,b,z),\#_{2}(b,c))=\max_{x\in\{0,1\}}g(x,y,z,\#_{1}(a,b,z),\#_{2}(b,c))
\]

We start with the r.h.s.:
\begin{multline}
\max_{x\in\{0,1\}}g(x,y,z,\#_{1}(a,b,z),\#_{2}(b,c))=\\
\max\left\{ g(0,y,z,\#_{1}(a,b,z),\#_{2}(b,c)),g(1,y,z,\#_{1}(a,b,z),\#_{2}(b,c))\right\} \label{eq:proof-shared-counter-blkafblk-1-1-1-1}
\end{multline}

Suppose $\#_{1}(a,b,z)=k_{1}$ and $\#_{2}(b,c)=k_{2}$. Then \eqref{eq:proof-shared-counter-blkafblk-1-1-1-1}
is equal to 
\[
\max\left\{ \mathfrak{g}(0,y,z,k_{1},k_{2}),\mathfrak{g}(1,y,z,k_{1},k_{2})\right\} 
\]
but this is exactly how $\mathfrak{f}(x,y,z,k_{1},k_{2})$ is defined,
thereby showing that this representation of $f$ is correct provided
that the accessed entries for $\mathfrak{g}$ are correct. But $\mathfrak{g}$
is only accessed on $\left(k_{1},k_{2}\right)$, which is a consistent
entry that we assumed to be correct. Realizing that we showed correctness
for the arbitrarily selected $y,z,a,b,c$, and hence for all $y,z,a,b,c$,
we complete the proof.

\subparagraph{Maxing out a Non-Shared Count Variable}

If we max out $a$, our redundant representation performs the 
operation
\begin{equation}
\mathfrak{f}(x,y,z,k_{1},k_{2})\triangleq\max\left\{ \mathfrak{g}(x,y,z,k_{1},k_{2}),\mathfrak{g}(x,y,z,k_{1}+1,k_{2})\right\} \label{eq:RR-Reduce-non-shared-count-1}
\end{equation}

Assume an arbitrary $x,y,z,b,c$. We need to show that $\mathfrak{f}$
represents function $f$ correctly, i.e. 
\[
f(x,y,z,\#_{1}'(b,z),\#_{2}(b,c))=\max_{a\in\{0,1\}}g(x,y,z,\#_{1}(a,b,z),\#_{2}(b,c)),
\]
with $\#_{1}'(b,z)$ being a \emph{reduced }CA.

We start with the r.h.s.:
\begin{multline}
\max_{a\in\{0,1\}}g(x,y,z,\#_{1}(a,b,z),\#_{2}(b,c))=\\
\max\left\{ g(x,y,z,\#_{1}(0,b,z),\#_{2}(b,c)),g(x,y,z,\#_{1}(1,b,z),\#_{2}(b,c))\right\} \label{eq:proof-shared-counter-blkafblk-1-1-1}
\end{multline}

Suppose $\#_{1}(0,b,z)=k_{1}$, $\#_{2}(b,c)=k_{2}$ then $\#_{1}(1,b,z)=k_{1}+1$
such that \eqref{eq:proof-shared-counter-blkafblk-1-1-1} is equal
to 
\[
\max\left\{ \mathfrak{g}(x,y,z,k_{1},k_{2}),\mathfrak{g}(x,y,z,k_{1}+1,k_{2})\right\} 
\]
but this is exactly how $\mathfrak{f}(x,y,z,k_{1},k_{2})$ is defined,
thereby showing that this representation of $f$ is correct for the
arbitrarily selected $x,y,z,b,c$, and hence for all $x,y,z,b,c$,
provided that $\mathfrak{g}(x,y,z,k_{1},k_{2})$ and $\mathfrak{g}(x,y,z,k_{1}+1,k_{2})$
are computed correctly. But both $(k_{1},k_{2})$ and $(k_{1}+1,k_{2})$
results from settings of particular values for $a,b,c,z$ and thus
are consistent CCs. Since we assumed that the input functions are
correct on the consistent entiries, the computation of $\mathfrak{f}$
is correct.

\subparagraph{Maxing out a Shared Counter Variable}

We are given $g(x,y,z,\#_{1}(a,b,z),\#_{2}(b,c))$ which is represented
as $\mathfrak{g}(x,y,z,k_{1},k_{2})$. If we max out $b$, our redundant
representation performs the following operation.
\begin{equation}
\mathfrak{f}(x,y,z,k_{1},k_{2})\triangleq\max\left\{ \mathfrak{g}(x,y,z,k_{1},k_{2}),\mathfrak{g}(x,y,z,k_{1}+1,k_{2}+1)\right\} \label{eq:RR-Reduce-shared-count-1}
\end{equation}
Here we want to show that this leads to the correct result. That is,
we need to prove that the represented functions represent the right
thing:
\[
\forall_{x,y,z,a,c}\qquad f(x,y,z,\#_{1}'(a,z),\#_{2}'(c))=\max_{b\in\{0,1\}}g(x,y,z,\#_{1}(a,b,z),\#_{2}(b,c)),
\]
with $\#_{1}'(a,z),\#_{2}'(c)$ being \emph{reduced }CAs.

Assume an arbitrary $x,y,z,a,c$. We need to show that 
\[
f(x,y,z,\#_{1}'(a,z),\#_{2}'(c))=\max_{b\in\{0,1\}}g(x,y,z,\#_{1}(a,b,z),\#_{2}(b,c))
\]
We start with the r.h.s.:
\begin{multline}
\max_{b\in\{0,1\}}g(x,y,z,\#_{1}(a,b,z),\#_{2}(b,c))=\\
\max\left\{ g(x,y,z,\#_{1}(a,0,z),\#_{2}(0,c)),g(x,y,z,\#_{1}(a,1,z),\#_{2}(1,c))\right\} \label{eq:proof-shared-counter-blkafblk-1}
\end{multline}

Suppose$\#_{1}(a,0,z)=k_{1}$, $\#_{2}(0,c)=k_{2}$ then $\#_{1}(a,1,z)=k_{1}+1$
and $\#_{2}(1,c)=k_{2}+1$, such that \eqref{eq:proof-shared-counter-blkafblk-1}
is equal to 
\[
\max\left\{ \mathfrak{g}(x,y,z,k_{1},k_{2}),\mathfrak{g}(x,y,z,k_{1}+1,k_{2}+1)\right\} 
\]
but this is exactly how $\mathfrak{f}(x,y,z,k_{1},k_{2})$ is defined,
thereby showing that this representation of $f$ is correct for the
arbitrarily selected $x,y,z,a,c$, and hence for all $x,y,z,a,c$,
provided that $\mathfrak{g}(x,y,z,k_{1},k_{2})$ and $\mathfrak{g}(x,y,z,k_{1}+1,k_{2}+1)$
are correct. Again, these are consistent entries,  thus completing
the proof.

\subparagraph{Maxing out a Shared Proper/Counter Variable}

In case we max out $z$, our redundant representation performs the operation
\begin{equation}
\mathfrak{f}(x,y,k_{1},k_{2})\triangleq\max\left\{ \mathfrak{g}(x,y,0,k_{1},k_{2}),\mathfrak{g}(x,y,1,k_{1}+1,k_{2})\right\} \label{eq:RR-Reduce-shared-proper-count-1}
\end{equation}

Assume an arbitrary $x,y,a,b,c$. We need to show that $\mathfrak{f}$
represents function $f$ correctly, i.e. 
\[
f(x,y,\#_{1}'(a,b),\#_{2}(b,c))=\max_{z\in\{0,1\}}g(x,y,z,\#_{1}(a,b,z),\#_{2}(b,c)),
\]
with $\#_{1}'(a,b)$ being a \emph{reduced }CA.

We start with the r.h.s.:
\begin{multline}
\max_{z\in\{0,1\}}g(x,y,z,\#_{1}(a,b,z),\#_{2}(b,c))=\\
\max\left\{ g(x,y,0,\#_{1}(a,b,0),\#_{2}(b,c)),g(x,y,1,\#_{1}(a,b,1),\#_{2}(b,c))\right\} \label{eq:proof-shared-counter-blkafblk-1-1}
\end{multline}

Suppose $\#_{1}(a,b,0)=k_{1}$, $\#_{2}(b,c)=k_{2}$. Then $\#_{1}(a,b,1)=k_{1}+1$,
such that \eqref{eq:proof-shared-counter-blkafblk-1-1} is equal to
\[
\max\left\{ \mathfrak{g}(x,y,0,k_{1},k_{2}),\mathfrak{g}(x,y,1,k_{1}+1,k_{2})\right\} 
\]
but this is exactly how $\mathfrak{f}(x,y,z,k_{1},k_{2})$ is defined.
Since the maximization is over consistent entries, this shows that
this representation of $f$ is correct for the arbitrarily selected
$x,y,a,b,c$, and hence for all $x,y,a,b,c$, thus completing the
proof. \end{proof}

\end{document}